\definecolor{dkgreen}{rgb}{0,0.6,0}
\definecolor{gray}{rgb}{0.5,0.5,0.5}
\definecolor{mauve}{rgb}{0.58,0,0.82}
\tiny\color{gray},
\newcommand{\NN}{\mathbb{N}}
\newcommand{\RR}{\mathbb{R}}
\newcommand{\mSigma}{\bm{\Sigma}}
\newcommand{\0}{\bm{0}}
\newcommand{\1}{\bm{1}}
\newcommand{\va}{\bm{a}}
\newcommand{\ve}{\bm{e}}
\newcommand{\vu}{\bm{u}}
\newcommand{\vv}{\bm{v}}
\newcommand{\vx}{\bm{x}}
\newcommand{\vy}{\bm{y}}
\newcommand{\vw}{\bm{w}}
\newcommand{\vell}{\bm{\ell}}
\newcommand{\vmu}{\bm{\mu}}
\newcommand{\vsigma}{\bm{\sigma}}
\newcommand{\tr}{\intercal}
\newcommand{\sC}{\mathcal{C}}
\newcommand{\sD}{\mathcal{D}}
\newcommand{\sE}{\mathcal{E}}
\newcommand{\sN}{\mathcal{N}}
\newcommand{\sO}{\mathcal{O}}
\newcommand{\sP}{\mathcal{P}}
\newcommand{\sS}{\mathcal{S}}
\newcommand{\sX}{\mathcal{X}}
\def\rvw{{\mathbf{w}}}
\DeclareMathOperator{\ReLU}{ReLU}
\DeclareMathOperator{\diag}{diag}
\DeclareMathOperator\erf{erf}
\DeclarePairedDelimiter\norm{\lVert}{\rVert}
\DeclarePairedDelimiter\tup{\langle}{\rangle}
\DeclarePairedDelimiter\set{\{}{\}}
\def\safe{{\textnormal{safe}}}
\newcommand{\hatsC}{\hat{\mathcal{C}}}
\newcommand{\Psafe}{P_{\safe}}
\newcommand{\NPcomplete}{NP-complete}
\theoremstyle{definition}
\newtheorem{definition}{Definition}[section]
\theoremstyle{definition}
\newtheorem{example}{Example}[section]
\newtheorem{proposition}{Proposition}[section]
\begin{document}

%
\runningtitle{Tight Verification of Bayesian Neural Networks}

%

\twocolumn[

\aistatstitle{Tight Verification of Probabilistic Robustness\\in Bayesian Neural Networks}

\aistatsauthor{ Ben Batten$^*$ \And Mehran Hosseini$^*$ \And  Alessio Lomuscio }

\runningauthor{ Ben Batten, Mehran Hosseini, Alessio Lomuscio }

\aistatsaddress{ Imperial College London \And  King's College London \And Imperial College London } ]

\begin{abstract}
  We introduce two algorithms for computing tight guarantees on the
  probabilistic robustness of Bayesian Neural Networks
  (BNNs). Computing robustness guarantees for BNNs is a significantly
  more challenging task than verifying the robustness of standard
  Neural Networks (NNs) because it requires searching the parameters'
  space for safe weights. Moreover, tight and complete approaches for
  the verification of standard NNs, such as those based on
  Mixed-Integer Linear Programming (MILP), cannot be directly used for
  the verification of BNNs because of the polynomial terms resulting
  from the consecutive multiplication of variables encoding the
  weights. Our algorithms efficiently and effectively search the
  parameters' space for safe weights by using iterative expansion and
  the network's gradient and can be used with any verification
  algorithm of choice for BNNs. In addition to proving that our
  algorithms compute tighter bounds than the SoA, we also evaluate our
  algorithms against the SoA on standard benchmarks, such as MNIST and
  CIFAR10, showing that our algorithms compute bounds up to 40\%
  tighter than the SoA.$^{\dagger}$
\end{abstract}

\section{Introduction}
\label{sec: Introduction}
\vspace{-1mm} In recent years \emph{Neural Networks} (\emph{NNs}) have
been proposed to perform safety-critical functions in applications
such as automated driving, medical image processing and
beyond~\citep{Abiodun+18, Anwar+18, Bojarski+16, Liu21}. For machine
learning methods to be adopted in these areas, assurance and
certification are paramount.

Following the known vulnerabilities of NNs to adversarial attacks
\citep{Dalvi+04, Szegedy+13, Ilyas+19, Sharif+16, GoodfellowSS14}, a
rapidly growing body of research has focused on methods to verify the
safety of NNs~\citep{PulinaT10, LomuscioM17, Katz+17, MadryMSTV18,
  Liu+21, Zhang+23} to identify fragilities before deployment.

\let\thefootnote\relax\footnotetext{\hspace{-1.8em}$^{*}$Equal contribution.}
\let\thefootnote\relax\footnotetext{\hspace{-1.8em}$^{\dagger}$The code is available at \texttt{https://github.com/benbatten/\\TightVerificationOfProbabilisticRobustnessInBNNs}.}
\def\thefootnote{\arabic{footnote}}

To mitigate NN vulnerabilities to adversarial attacks, several methods
have been proposed, including adversarial training \citep{MadryMSTV18,
  Tramer+18, GaninUAGLLML17, Shafahi+19}, and verification-based
augmentation~\citep{Dreossi+19}. While considerable progress has been
made in these areas, present models remain fragile to various input
perturbations, thereby limiting their potential applicability in key
areas.

In an attempt to overcome this problem, architectures that are
inherently more robust have been proposed. Bayesian Neural Networks
(BNNs) have arguably emerged as a key class of models intrinsically
more resilient to adversarial attacks~\citep{Pang+21,
  Carbone+20}. This is because BNNs naturally incorporate the
randomness present in the posterior distribution \citep{UchenduCMH21}.

Although BNNs have been shown to provide a degree of natural
robustness, it is crucial to verify their safety and correctness with
respect to specifications of interest, including robustness, before
deployment. Note that while safety is formulated as a standard
decision problem in deterministic NNs, for BNNs, safety is a
probabilistic concept. Specifically, given a BNN, an input region and
a safe set, the probabilistic robustness problem concerns determining
the probability that the input neighbourhood is mapped to a safe set
by the BNN~\citep{Cardelli+19, CardelliKLP19, WickerLPK20}.

The probabilistic robustness problem for BNNs is also sometimes
referred to as the \emph{probabilistic safety problem}
\citep{CardelliKLP19}. Since the exact computation of probabilistic
robustness for BNNs is computationally intractable, several approaches
for approximating it have been proposed~\citep{WickerLPK20,
  Berrada+21, Lechner+21}.

A key challenge in this area is to approximate the probabilistic
robustness of BNNs tightly and efficiently. As we demonstrate in
Sections~\ref{sec: Verification} and \ref{sec: Evaluation}, existing
methods are either loose or computationally expensive. It therefore
remains of great interest to explore new approaches that can derive
tighter approximations to probabilistic robustness while remaining
computationally effective. To this end, our contributions in this
paper can be summarised as follows.
\begin{itemize}
\item We introduce two new algorithms, \emph{Pure Iterative Expansion}
  (\emph{PIE}) and \emph{Gradient-guided Iterative Expansion}
  (\emph{GIE}). We show that they produce sound lower bounds on the
  probabilistic robustness of BNNs, which are provably tighter than
  SoA sampling-based approaches.
\item We empirically evaluate our algorithms on MNIST and CIFAR10
  benchmarks against the SoA and show that our approach produces
  tighter approximations of the probabilistic robustness.
\item For reproducibility, we provide our code in the supplementary
  material.
\end{itemize}

The rest of the paper is organised as follows. In
Subsection~\ref{subsec: Related Work}, we review the related
literature on NN and BNN verification. In Section~\ref{sec:
  Preliminaries}, we formally define BNNs, the probabilistic
robustness problem for BNNs, and other related terminology that we use
throughout the paper. We discuss the details of our approach in
Section~\ref{sec: Verification} by introducing the PIE and GIE
algorithms for exploring the set of safe weights and computing lower
bounds for the probabilistic robustness of BNNs in
Subsection~\ref{subsec: Gradient-Guided Tiling}. We also formally show
that our approach outperforms the SoA sampling-based method in
Subsection~\ref{subsec: Comparison} by producing provably tighter
bounds. We evaluate the performance of our approach against SoA on the
MNIST and CIFAR10 datasets in Section~\ref{sec: Evaluation}.

\subsection*{Related Work}
\label{subsec: Related Work}
\emph{Mixed-Integer Linear Programming} (\emph{MILP}) and \emph{Bound
  Propagation} (\emph{BP}) are two common techniques used for the
verification of standard NNs. In MILP-based approaches, the
verification problem for NNs is encoded as an MILP problem
\citep{LomuscioM17, HosseiniL23, TjengXT19}, which can be solved using
MILP solvers. A common assumption in MILP-based approaches is that
activations are piecewise-linear.

Propagation-based approaches differ from MILP-based approaches in that
they consider a relaxed network and propagate bounds on each node's
output through the network's layers to solve the verification
problem~\citep{Gowal+18, SinghGPV19, SinghGMPV18, Wang+19}.

SoA approaches for the verification of NNs combine MILP and BP to
achieve complete and scalable verification for NNs. For instance,
\citep{KouvarosL21} uses bound propagation alongside \emph{adaptive
  splitting} \citep{HenriksenL20, BotoevaKKLM20} to reduce the number
of binary variables in the resulting MILP problem and thus further
speed up the verification process.
An important and well-studied safety specification in NN verification
is \emph{robustness}, i.e., a model's resilience in returning the same
output for all values in a small neighbourhood of a given input. An
example approach devoted only to checking robustness
is~\citep{CardelliKLP19}, where they compute upper bounds on the
robustness of BNNs.
Closely related to this, \citep{Cardelli+19} considers the problem of
computing the probability of whether, for a given input to a BNN,
there exists another input within a bounded set around the input such
that the network's prediction differs for the two points and provides
an approach to approximate this probability and compute statistical
guarantees for it.

In the same line of research \citep{Carbone+20} shows that the
vulnerability of BNNs to gradient-based attacks arises as a result of
the data lying on a lower-dimensional submanifold of the ambient
space.

\citep{WickerLPK20} considers a set-up similar to this paper, using a
sampling-based approach to bound the probabilistic robustness of
BNNs. In this approach, they sample orthotopes from the posterior
distribution and use \emph{Interval Bound Propagation} (\emph{IBP})
and \emph{Linear Bound Propagation} (\emph{LBP}) to determine whether
a given orthotope is safe. By repeating this process, they obtain a
robustness certificate and a lower bound for the probabilistic
robustness of the BNN.

\citet{Berrada+21} consider the \emph{adversarial robustness} of BNNs.
They use the notion of Lagrangian duality, which replaces standard
Lagrangian multipliers with functional multipliers. For an optimal
choice of functional multipliers, this approach leads to exact
verification, i.e., the exact computation of probabilistic
robustness. Nevertheless, this may not be practical. The authors show
that for specific classes of multipliers, however, one can obtain
lower bounds for the probabilistic robustness.

\citet{AdamsPLL23} introduced an algorithm to verify the adversarial
robustness of BNNs. They use Dynamic Programming alongside bound
propagation and convex relaxations to compute lower and upper bounds
on BNN predictions for a given input set. \citet{WickerPLW23} build on
the algorithm first presented in \citep{WickerLPK20} to provide upper
bounds on the probabilistic robustness of BNNs in addition to lower
bounds. Moreover, they extend the algorithmic formulation in
\citep{WickerLPK20} to handle the specification proposed in
\citep{Berrada+21} on posterior distributions with unbounded support.

\citep{Lechner+21} uses MILP to obtain robustness guarantees for
infinite time-horizon safety and robustness of BNNs. However, this
approach can only be used for small networks. \citep{WijkWK22} obtains
robustness guarantees for credal BNNs via constraint relaxation over
probabilistic circuits.


%
\section{Preliminaries}
\label{sec: Preliminaries}
Throughout this paper, we denote the set of real numbers by \(\RR\),
use italics, \(w\), to refer to scalers in \(\RR\), bold italics,
\(\vw\), for vectors in \(\RR^n\), and bold letters, \(\rvw\), to
refer to random variables in \(\RR^n\). For an orthotope
\(\sO \subset \RR^n\) with two opposing corners \(\vw_1, \vw_2\), we
denote \(\sO\) by \([\vw_1, \vw_2]\).

In the rest of this section, we define BNNs and briefly discuss common
approaches for training them in Subsection~\ref{subsec: BNNs}; we then
define the robustness problem for BNNs and a few related concepts,
which we will use in Section~\ref{sec: Verification}, in
Subsection~\ref{subsec: Probabilistic Safety}.

\subsection{Bayesian Neural Networks}
\label{subsec: BNNs}
\begin{definition}[Bayesian Neural Network]
  \label{def: BNN}
  Given a dataset, \(\sD\), let \(f_{\rvw}: \RR^m \to \RR^n\) be a
  \emph{Bayesian neural network} with \(\rvw\) a vector random
  variable consisting of the parameters of the network. Assuming a
  prior distribution over the parameters, \(\rvw \sim p(\vw)\),
  training amounts to computing the posterior distribution,
  \(p(\vw | \sD)\), by the application of Baye's rule. For a vector,
  $\vw$, sampled from the posterior, \(p(\vw | \sD)\), we denote the
  deterministic neural network with parameters \(\vw\) by \(f_{\vw}\).
  We use \(n_{\vw}\) to refer to the dimension of the parameters space
  of the network.
\end{definition}

In BNN literature, it is customary to assume the prior and the
posterior both have normal distribution; nevertheless, our
\emph{Pure Iterative Expansion} (\emph{PIE}) algorithm works for
arbitrary distributions, and our \emph{Gradient-guided Iterative
  Expansion} (\emph{GIE}) algorithm works for all continuous
distributions. In the case of normal distribution, training requires
finding the mean \(\vmu \in \RR^{n_{\vw}}\) and covariance
\(\mSigma \in \RR^{n_{\vw} \times n_{\vw}}\) of the parameters such
that \(p(\vw | \sD) \approx q(\vw) \sim \sN(\vmu, \mSigma)\). We refer
to the parameters' variance by \(\vsigma^2 = \diag(\mSigma)\). When
the parameters are independent, i.e., when the covariance matrix
\(\mSigma\) is diagonal, we denote a normal distribution with mean
\(\vmu\) and covariance matrix \(\mSigma\) by
\(\sN(\vmu, \vsigma^2)\). Similarly, we denote an arbitrary
probability distribution with mean \(\vmu\) and variance \(\vsigma^2\)
by \(\sP(\vmu, \vsigma^2)\).

Despite the similarities between standard, deterministic NNs and BNNs,
training BNNs poses a greater challenge since computing the posterior
of a BNN is computationally infeasible due to non-linearities in the
network~ \citep{Mackay92}. Some of the approaches for calculating the
posterior distribution include \emph{Variational Inference}
(\emph{VI}) \citep{Blundell+15}, \emph{Markov Chain Monte Carlo}
(\emph{MCMC}) \citep{Hastings70}, combinations of VI and MCMC
\citep{Salimans+15}, or other approximate approaches. For a comparison
of the approaches see \citep{Jospin+22}.

\subsection{The Probabilistic Robustness Problem for BNNs}
\label{subsec: Probabilistic Safety}
Let us start by defining the probabilistic robustness problem for
BNNs.
\begin{definition}[Probabilistic Robustness Problem]
  \label{def: Probabilistic Safety and Accuracy}
  Given a BNN \(f_{\rvw}: \RR^m \to \RR^n\), a \emph{robustness
    specification} for \(f_{\rvw}\) is a tuple
  \(\varphi = \tup{\sX, \sS}\), where \(\sX \subseteq \RR^m \) is an
  \(\epsilon\)-ball \(B_{\epsilon}(\vx)\) around a given
  \(\vx \in \RR^m\) with respect to a norm \(\norm{\cdot}_p\) and
  \(\sS \subseteq \RR^n\) is a half-space
  \begin{equation}
    \label{eq: Half Space}
    \sS = \set{\vy : \va^{\tr} \cdot \vy \geq b},
  \end{equation}
  for some \(\va \in \RR^n\) and \(b \in \RR\). The
  \emph{probabilistic robustness problem} concerns determining the
  probability that for all inputs in \(\sX\) the output of \(f_\rvw\) is in
  \(\sS\), i.e.,
  \begin{equation*}
    \label{eq: Probabilistic Safety}
    P_{\safe}(\sX, \sS) = P_{\vw \sim \rvw}(\forall \vx \in \sX, f_{\vw}(\vx) \in \sS).
  \end{equation*}
\end{definition}

We refer to \(\sX\) as the \emph{input constraint} set and \(\sS\) as
the \emph{output constraint} set. The probabilistic robustness problem
is sometimes referred to as the \emph{probabilistic safety problem}
\citep{CardelliKLP19} or the \emph{adversarial robustness problem}
\citep{Berrada+21} for BNNs; in the same light, $\varphi$ is also
referred to as the \emph{safety specification}.

For a classification task, when the goal is to verify for a correct
class \(c\) against a target class \(c'\), we fix $\va_c = 1$,
$\va_{c'} = -1$, and $\va_i = 0$ for, $i \neq c, c'$ in
Equation~\eqref{eq: Half Space}.

\begin{definition}[Set of Safe Weights]
  The \emph{set of safe weights} for a BNN with respect to a
  specification \(\varphi = \tup{\sX, \sS}\), is defined as
  \begin{equation}
    \label{eq: Set of Safe Weights}
    \sC = \set{\vw \in \RR^{n_{\vw}} : \forall \vx \in \sX, f_{\vw}(\vx) \in \sS}.
  \end{equation}
\end{definition}%

\begin{proposition}
  \label{prop: Integral}
  Given a BNN \(f_{\rvw}\), such that \(\rvw \sim q(\vw)\) for a
  probability density function \(q\), and a safety specification
  \(\tup{\sX, \sS}\), we have that
  \begin{equation*}
    \label{eq: Safe Weights to Probabilistic Safety}
    P_\safe (\sX, \sS) = \int_\sC q(\vw) d\vw.
  \end{equation*}
\end{proposition}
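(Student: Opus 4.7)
The plan is to prove the proposition by unfolding the definitions and recognising that the event whose probability defines $P_\safe$ is precisely membership in $\sC$, so the probability can be written as an integral of the density $q$ over $\sC$.

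First, I would start from the definition of $P_\safe(\sX, \sS)$ given in Definition~\ref{def: Probabilistic Safety and Accuracy}, namely
\begin{equation*}
  P_\safe(\sX, \sS) = P_{\vw \sim \rvw}\bigl(\forall \vx \in \sX,\, f_{\vw}(\vx) \in \sS\bigr).
\end{equation*}
Next, I would observe that the predicate inside the probability is exactly the defining condition of the set of safe weights in Equation~\eqref{eq: Set of Safe Weights}. In particular, for any realisation $\vw \in \RR^{n_\vw}$, the statement ``$\forall \vx \in \sX,\, f_{\vw}(\vx) \in \sS$'' holds if and only if $\vw \in \sC$. Hence the event inside $P_{\vw \sim \rvw}(\cdot)$ is $\{\vw \in \sC\}$, and so $P_\safe(\sX, \sS) = P_{\vw \sim \rvw}(\vw \in \sC)$. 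Finally, since $\rvw$ has density $q$ by assumption, the standard definition of probability via a density gives $P_{\vw \sim \rvw}(\vw \in \sC) = \int_\sC q(\vw)\, d\vw$, which is the desired identity.

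The main obstacle I anticipate is not algebraic but measure-theoretic: the integral $\int_\sC q(\vw)\, d\vw$ is only well-defined once $\sC$ is a Borel (or Lebesgue) measurable subset of $\RR^{n_\vw}$. I would therefore need to justify measurability, which follows from the fact that $\vw \mapsto f_\vw(\vx)$ is continuous for each fixed $\vx$ in the standard BNN architectures considered in the paper, together with $\sS$ being a closed half-space; writing $\sC = \bigcap_{\vx \in \sX} \{\vw : \va^\tr f_\vw(\vx) \geq b\}$ and using continuity in $\vx$ to reduce the intersection to a countable one over a dense subset of the compact set $\sX$ yields a countable intersection of closed sets, hence a Borel set. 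Once measurability is established, the rest of the proof is a one-line application of the definitions, so I would keep the technical justification brief (likely as a remark) and focus the main statement on the identity itself.
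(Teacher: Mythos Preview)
Your proposal is correct and matches the paper's treatment: the paper states this proposition without proof, treating it as immediate from the definitions of $P_\safe$ and $\sC$, which is exactly the unfolding you carry out. Your additional measurability remark goes beyond what the paper provides but is a reasonable technical aside.
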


Since solving the robustness problem for deterministic networks is
\NPcomplete{} \citep{Katz+17, Sinha+18}, verifying whether a given
\(\vw \in \sC\) is also \NPcomplete. Therefore, it is infeasible to
calculate \(\Psafe\) exactly even for moderately sized networks.
In the rest of this paper we put forward two algorithms for
calculating lower-bound approximations of \(\Psafe\) in a provably
tighter way than previous approaches.


%
\section{Computing the Probabilistic Robustness for BNNs}
\label{sec: Verification}
In Subsection~\ref{subsec: Probabilistic Safety} we discussed why
calculating \(\Psafe\) is computationally infeasible. Besides
empirical sampling, the current approaches for computing \(\Psafe\)
rely on sampling orthotopes \(\hatsC_i \subseteq \RR^{n_{\vw}}\) with
fixed sizes, whose edges are proportional to the standard deviations
of the weights of the network, and verifying whether
\(f_{\hatsC_i}(\sX) \subseteq \sS\) through \emph{Linear Bound
  Propagation} (LBP) \citep{WickerLPK20}. This enables the computation
of an under-approximation \(\hatsC = \bigcup \hatsC_i \subseteq \sC\),
which, in turn, allows computing the lower bound
\begin{equation*}
  \label{eq: Lower Bound}
  p_{\text{safe}} = \int_{\hatsC} q(\vw) d\vw \leq P_{\safe}(\sX, \sS).
\end{equation*}
%

One weakness of this approach is that it uses orthotopes of the same
size for all specifications. For example, whilst for a given input to
a BNN it may be possible to fit large orthotopes around the weights's
mean and use LBP to verify the safety of these orthotopes, for another
input to the same network this may not necessarily be the case, and
must resort to smaller orthotopes, which in high dimensions result in
obtaining minuscule probabilistic robustness lower bounds, a
phenomenon demonstrated in Example~\ref{ex: Bounded Support}.

To mitigate this problem, we introduce two approaches that allow us to
dynamically adjust the size of orthotopes
\(\hatsC_1, \dots, \hatsC_s\). The \emph{Pure Iterative Expansion}
(\emph{PIE}) approach, which we introduce in Subsection~\ref{subsec:
  Dynamic Scaling}, allows us to dynamically \emph{scale} the size of
and orthotope \(\hatsC_i\), and therefore, cover a larger volume in
\(\sC\).

The more sophisticated \emph{Gradient-guided Iterative Expansion}
(\emph{GIE}) approach, which we introduce in Subsection~\ref{subsec:
  Gradient-Guided Tiling}, not only allows the use of dynamic scaling,
but also expands the orthotopes further in the dimensions that allow
\(\hatsC_i\)'s to remain in \(\sC\), according to the posterior's
gradient, \(\nabla_{\rvw}f_{\rvw}\).

Both PIE and GIE can be accompanied by LBP or other approaches. Since
our iterative expansion approach depends on iteratively expanding
\(\hatsC_i\)'s and verifying their safety, we use LBP, as described
and implemented in~\citep{WickerLPK20}, because of its significant
computational efficiency.

\subsection{Pure Dynamic Scaling}
\label{subsec: Dynamic Scaling}
Given a BNN, \(f_{\rvw}\), and safety specification,
\(\varphi = \tup{\sX, \sS}\), remember that we denote the set of safe
weights of \(f_{\vw}\) with respect to \(\varphi\) as \(\sC\). The
pure sampling and LBP approach \citep{WickerLPK20} starts by the
under-approximation \(\hatsC = \set{}\) of \(\sC\) and samples
\(\vw_0 \sim q(\vw)\). It then defines
\(\hatsC_0 = [\vw_0\pm \lambda \vsigma]\), where \(\lambda > 0\) is
the scaling factor and \(\vsigma\) is the parameters' standard
deviation vector. Then, using IBP or LBP, it checks whether
\(\hatsC_0 \subseteq \sC\). If this is the case,
\(\hatsC = \hatsC \cup \hatsC_0\); otherwise, \(\hatsC_0\) is
discarded. This process is repeated until a given number of weights
\(s\) have been tried.

As we briefly discussed in the beginning of Section~\ref{sec:
  Verification} and demonstrate in Section~\ref{sec: Evaluation}, the
lower bounds computed using a pure sampling approach are not tight,
because of the rigidity of the sampled orthotopes.

The first and most straightforward solution to this problem that we
propose is to instead use dynamically-scaled orthotopes, as outlined
in Algorithm~\ref{alg: PIE}. Wherein, after sampling an orthotope
\(\hatsC_i = [\vw_i\pm \lambda \vsigma] \subseteq \sC\) and verifying
it (line 8 in Algorithm~\ref{alg: PIE}), we expand the orthotope to
\(\hatsC_i= [\vw_i\pm 2\lambda \vsigma]\) rather than immediately
moving on to the next sampled parameter vector, as in the pure
sampling approach. After the initial expansion, we again check
\(\hatsC_i \subseteq \hatsC\). If this is the case, we can repeat the
same process for \(\hatsC_i= [\vw_i\pm 3 \lambda \vsigma]\). We repeat
this, until \(\hatsC_i \not\subseteq \sC\).

\begin{algorithm}[t]
  \caption{Certifying BNNs using PIE}
  \label{alg: PIE}
  \begin{algorithmic}[1]
    \STATE {\bfseries Input:} BNN \(f_{\rvw}\),
    \(\rvw \sim \sP(\vmu, \boldsymbol{\sigma}^2)\), safety specification
    \(\varphi = \tup{\sX, \sS}\), number of samples \(s \in \NN\),
    and scaling factor \(\lambda > 0\).
    \STATE {\bfseries Output:} \(\hat{\sC}\) and \(p_{\text{safe}}\).
    \STATE \(\hat{\sC} = \set{}\)\;
    \FOR{\(i \gets 1\) to \(s\)}
    \STATE \(\vw \sim \sP(\vmu, \boldsymbol{\sigma}^2)\)
    \STATE \(j = 1\)
    \STATE \(\bar{\sC} = \set{}\)
    \WHILE{\(LBP([\vw \pm j \lambda \vsigma], \sX) \subseteq \sS\)}
    \STATE \(\bar{\sC} = [\vw \pm j \lambda \vsigma]\)
    \STATE \(j = j + 1\)
    \ENDWHILE
    \STATE \(\hatsC = \hatsC \cup \bar{\sC}\)
    \ENDFOR
    \STATE {\bfseries Return} \(\hat{\sC}\) and \(\int_{\hat{\sC}} q(\vw) d\vw\)
  \end{algorithmic}
\end{algorithm}

Example~\ref{ex: Iterative Expansion} demonstrates the advantage of
the PIE approach over the pure sampling-based approach.
\begin{example}
  \label{ex: Iterative Expansion}
  Consider a BNN with two Bayesian layers,
  \(f_{\vw}(x) = \ReLU(w_2 \cdot \ReLU(w_1 \cdot x))\), where
  \(w_, w_2 \sim \sN(0, 1)\) and no bias terms. For the input
  constraint set \(\sX = \{1\}\) and the output constraint set
  \(\sS = \set{y \in \RR : y \leq 1}\), the set of safe weights is
  \(\sC = \left\{\begin{pmatrix}w_1 & w_2\end{pmatrix}^{\tr}\in\RR^2 :
    \ReLU(w_2 \cdot \ReLU(w_1)) \leq 1\right\}.\) The set \(\sC\) is
  shown blue in Figures~\ref{subfig: Pure Sampling} to \ref{subfig:
    Gradient-Guided}.
    
  For the same sampled weights
  \(\vw_1, \dots, \vw_4 \sim \sN(\0, \1)\) and scaling factor
  \(\lambda=1\), areas covered by the pure sampling approach, PIE
  approach for 2 iterations, and GIE approach
  (cf. Subsection~\ref{subsec: Gradient-Guided Tiling}) for 2
  iterations (see Subsection~\ref{subsec: Gradient-Guided Tiling}) are
  shown in Figures~\ref{subfig: Pure Sampling}, \ref{subfig: Iterative
    Expansion}, and \ref{subfig: Gradient-Guided}, respectively. We
  observe that the GIE approach (Figure~\ref{subfig: Gradient-Guided})
  has the greatest coverage of \(\sC\), followed by the the PIE
  approach (Figure~\ref{subfig: Iterative Expansion}), which has
  greater coverage than the pure sampling approach
  (Figure~\ref{subfig: Pure Sampling}).
\end{example}
\begin{figure*}[t]
  \begin{center}
    \begin{subfigure}[b]{0.32\textwidth}
      \centering
      \includegraphics[width=\textwidth]{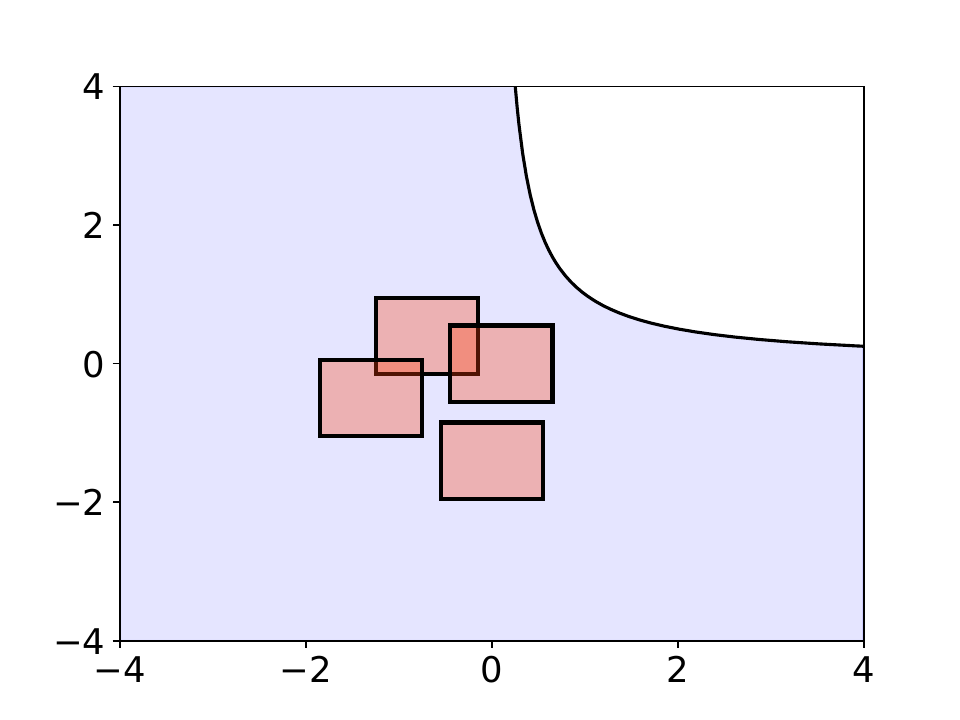}
      \caption{Pure sampling.}
      \label{subfig: Pure Sampling}
    \end{subfigure}
    \hfill
    \begin{subfigure}[b]{0.32\textwidth}
      \centering
      \includegraphics[width=\textwidth]{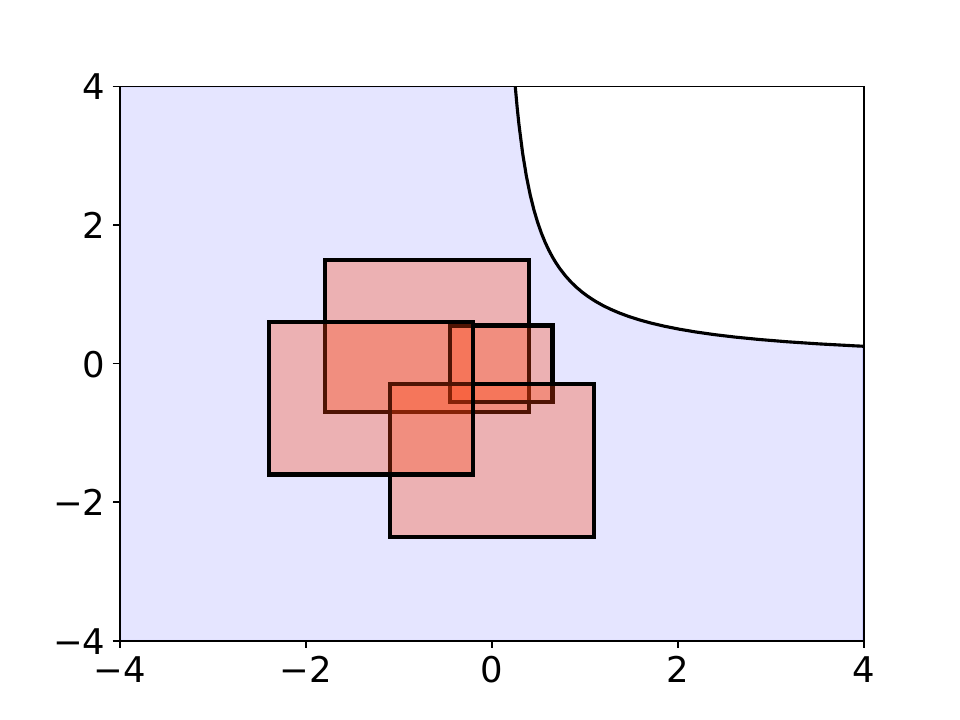}
      \caption{PIE.}
      \label{subfig: Iterative Expansion}
    \end{subfigure}
    \hfill
    \begin{subfigure}[b]{0.32\textwidth}
      \centering
      \includegraphics[width=\textwidth]{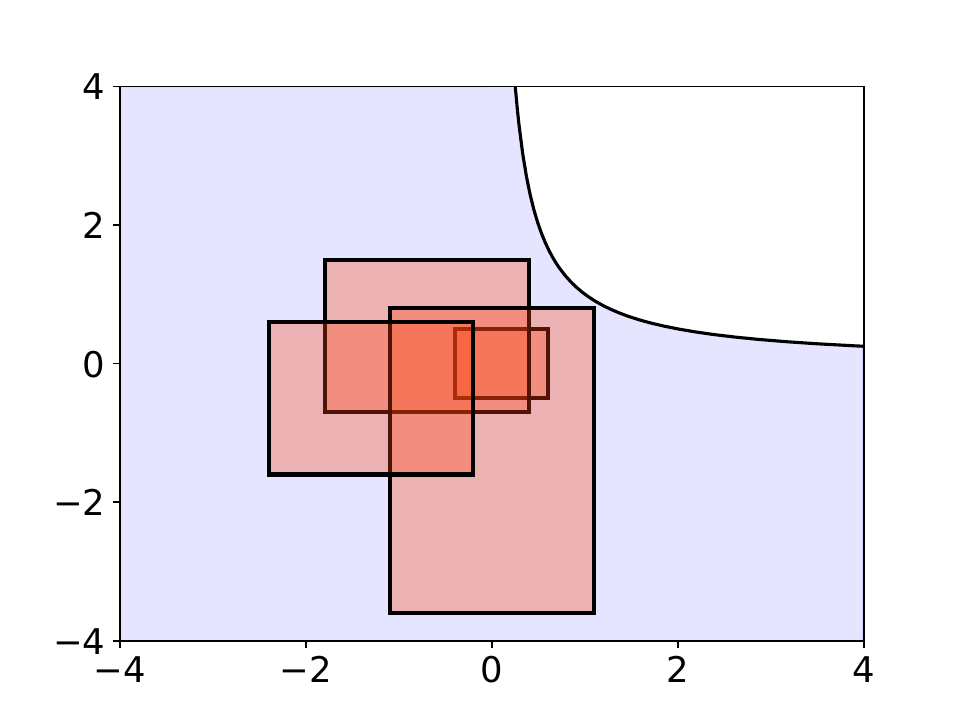}
      \caption{GIE.}
      \label{subfig: Gradient-Guided}
    \end{subfigure}
  \end{center}
  \caption{A comparison between the area of the set of safe weights
    \(\sC\) in Example~\ref{ex: Iterative Expansion} that is covered
    by the pure sampling (\ref{subfig: Pure Sampling}), PIE after two
    iterations (\ref{subfig: Iterative Expansion}), and GIE after two
    iterations (\ref{subfig: Gradient-Guided}) for the same initial
    weights \(\vw_1, \dots, \vw_4 \sim \sN(\0, \1)\). See
    Section~\ref{sec: Evaluation} for detailed comparisons, including
    comparison with equal compute.}
  \label{fig: 2D Example}
\end{figure*}
\subsection{Gradient-Guided Dynamic Scaling}
\label{subsec: Gradient-Guided Tiling}
The goal of this subsection is to improve Algorithm~\ref{alg: PIE}
even further by using the gradient of the network,
\(\nabla_{\rvw}f_{\rvw}\), to maximise our coverage of the safe set of
weights, $\mathcal{C}$, as summarised in Algorithm~\ref{alg:
  GIE}. Recall that for a given point in the parameter space,
\(\vw \in \sC\), our goal is to explore \(\sC\) starting from
\(\vw\). Instead of expanding \(\hatsC_i\) uniformly in all
dimensions, we can use the parameter-wise BNN gradient,
\(\nabla_{\rvw}f_{\rvw}\), to expand \(\hatsC_i\) further in
dimensions that do not violate \(\hatsC_i \subseteq \sC\).

From Equation~\eqref{eq: Set of Safe Weights}, we have that
\begin{equation*}
  \sC = \set{\vw \in \RR^{n_{\vw}} :
    \va^{\tr} \cdot f_{\vw}(B_{\epsilon}(\vx)) \geq b},
\end{equation*}
for some \(\va \in \RR^n\), \(\vx \in \RR^m\), \(b \in \RR\), and
\(\epsilon > 0\). To maximise the volume of a candidate orthotope,
\(\hatsC_i\), while staying within the bounds of \(\sC\), we propose
expanding further in directions with \emph{zero} or \emph{positive
  gradients} in $\nabla_{\vw}\va^{\tr} \cdot
f_{\rvw}(\vx)$. Concretely, let
\begin{equation*}
  \begin{split}
    \sE^{-} & = \set{\ve_i \in \RR^{n_{\vw}} : \nabla_{\rvw} (\va^{\tr} \cdot f_{\vw})(\vx) \cdot \ve_i < 0}, \\
    \sE^{+} & = \set{\ve_i \in \RR^{n_{\vw}} : \nabla_{\rvw} (\va^{\tr} \cdot f_{\vw})(\vx) \cdot \ve_i > 0}, \\
    \sE^{0} \  & = \set{\ve_i \in \RR^{n_{\vw}} : \nabla_{\rvw} (\va^{\tr} \cdot f_{\vw})(\vx) \cdot \ve_i = 0},
  \end{split}
\end{equation*}
where \(\set{\ve_1, \dots, \ve_{n_{\vw}}}\) is the standard basis for
the \(\RR^{n_{\vw}}\).

Intuitively, we can cover more of $\sC$ by moving further in
directions that take us away from the boundary at
$\va^{\tr} \cdot f_{\vw}(\vx) = b$. In particular, when the
activations considered are \(\ReLU\), where we have that
\(\nabla_{\rvw} (\va^{\tr} \cdot f_{\vw_i})(\vx) \cdot \ve_i = 0\) for
all \(i\)'s corresponding to a weight feeding directly to an inactive
node. Moreover, the value of a node is a locally linear function of
the weights that are input to that node. Therefore, we have the
freedom of moving in the directions corresponding to these weights
without changing the output of the network, and hence, remaining
inside \(\sC\). We note that this assumption only holds locally, and
with larger deviations in the parameter space, the activation pattern
of the network will change. However, we observe that assuming local
linearity yields a substantial increase in the volume of verifiable
\(\hatsC_i\)'s.

With this knowledge, let us define
\begin{equation*}
  \ve^+ = \!\!\!\sum_{\ve_i \in \sE^+ \cup \sE^0} \!\!\ve_i, \quad\text{and}\quad
  \ve^- = \!\!\!\sum_{\ve_i \in \sE^- \cup \sE^0} \!\!\ve_i.
\end{equation*}
\begin{algorithm}[t]
  \caption{Certifying BNNs using GIE}
  \label{alg: GIE}
  \begin{algorithmic}[1]
    \STATE {\bfseries Input:} BNN \(f_{\rvw}\),
    \(\rvw \sim \sP(\vmu, \boldsymbol{\sigma}^2)\), safety specification
    \(\varphi = \tup{\sX, \sS}\), number of samples \(s \in \NN\),
    scaling factor \(\lambda > 0\), and gradient-based scaling
    factor \(\rho > 0\).
    \STATE {\bfseries Output:} \(\hat{\sC}\) and \(p_{\text{safe}}\).
    \STATE \(\hat{\sC} = \set{}\)\;
    \FOR{\(i \gets 1\) to \(s\)}
    \STATE \(\vw \sim \sP(\vmu, \boldsymbol{\sigma}^2)\)
    \STATE \(\sE^{-} = \set{\ve_i \in \RR^{n_{\vw}} : \nabla_{\rvw} (\va^{\tr} \cdot f_{\vw})(\vx) \cdot \ve_i < 0}\)
    \STATE \(\sE^{+} = \set{\ve_i \in \RR^{n_{\vw}} : \nabla_{\rvw} (\va^{\tr} \cdot f_{\vw})(\vx) \cdot \ve_i > 0}\)
    \STATE \(\sE^{0} \  = \set{\ve_i \in \RR^{n_{\vw}} : \nabla_{\rvw} (\va^{\tr} \cdot f_{\vw})(\vx) \cdot \ve_i = 0}\)
    \STATE \(\displaystyle \vv^+ = \lambda\vsigma(\1 + \rho\!\!\!\sum_{\ve_i \in \sE^+ \cup \sE^0} \!\!\ve_i)\)
    \STATE \(\displaystyle \vv^- = \lambda\vsigma(\1 + \rho\!\!\!\sum_{\ve_i \in \sE^- \cup \sE^0} \!\!\ve_i)\)
    \STATE \(j = 1\)
    \STATE \(\bar{\sC} = \set{}\)
    \WHILE{\(LBP([\vw_i - j\vv^-, \vw_i + j\vv^+], \sX) \subseteq \sS\)}
    \STATE \(\bar{\sC} = [\vw_i - j\vv^-, \vw_i + j\vv^+]\)
    \STATE \(j = j + 1\)
    \ENDWHILE
    \STATE \(\hatsC = \hatsC \cup \bar{\sC}\)
    \ENDFOR
    \STATE {\bfseries Return} \(\hat{\sC}\) and \(\int_{\hat{\sC}} q(\vw) d\vw\)
  \end{algorithmic}
\end{algorithm}

Now, instead of starting from the orthotope
\(\hatsC_i = [\vw_i \pm \lambda \vsigma]\) as in the PIE approach,
we start from
\(\hatsC_i = [\vw_i - \lambda \vsigma (\1 + \rho \ve^-), \vw_i +
\lambda \vsigma (\1 + \rho \ve^+)]\), where \(\rho > 0\) is the
\emph{gradient-based scaling factor}. Then, we check whether
\(\hatsC_i \subseteq \hatsC\) (line 12 in Algorithm~\ref{alg: GIE})
and iteratively expand \(\hatsC_i\), similarly to the PIE approach,
until \(\hatsC_i \not\subseteq \sC\).

\subsection{Theoretical Comparison}
\label{subsec: Comparison}
Here we compare Algorithms~\ref{alg: PIE} and \ref{alg: GIE} against
the SoA sampling-based approach~\citep{WickerLPK20} and show that both
algorithms introduced here produce provably tighter lower bounds for
the probabilistic robustness of BNNs. We start this by comparing
Algorithms~\ref{alg: PIE} and \ref{alg: GIE} against the static
orthotopes approach of \citet{WickerLPK20}. We show that
Algorithms~\ref{alg: PIE} and \ref{alg: GIE} always provides tighter
lower bounds than the static orthotopes approach. We provide an
intuition for the three approaches in Figure~\ref{fig: CIFAR Covered
  Area}.
\begin{proposition}
  For every BNN, \(f_{\rvw}\), and specification,
  \(\phi = \tup{\sX, \sS}\), let \(p\) be the probabilistic robustness
  of \(f_{\rvw}\) with respect to \(\phi\), and \(p_s, p_p\), and
  \(p_g\) be the probabilistic robustness lower bounds computed with
  the same sampled weights \(\vw_1, \dots, \vw_s\) using the pure
  sampling, PIE, and GIE approaches, respectively. Then we have that
  \(p_s \leq p_p, p_g \leq p\).
\end{proposition}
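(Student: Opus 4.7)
The plan is to reduce every inequality to a set-inclusion between the under-approximations $\hatsC_s$, $\hatsC_p$, $\hatsC_g \subseteq \RR^{n_{\vw}}$ produced by the three algorithms and then invoke Proposition~\ref{prop: Integral} together with non-negativity of the density $q$: any chain $A \subseteq B \subseteq \sC$ yields
\begin{equation*}
\int_A q(\vw)\, d\vw \;\leq\; \int_B q(\vw)\, d\vw \;\leq\; \int_{\sC} q(\vw)\, d\vw \;=\; p.
\end{equation*}

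The upper bounds $p_p \leq p$ and $p_g \leq p$ are the easy direction and follow from the soundness of LBP. Every orthotope $\bar\sC$ recorded inside the while-loops of Algorithms~\ref{alg: PIE} and~\ref{alg: GIE} has passed the check $\text{LBP}(\bar\sC, \sX) \subseteq \sS$, which certifies $f_{\vw}(\sX) \subseteq \sS$ for all $\vw \in \bar\sC$, i.e.\ $\bar\sC \subseteq \sC$. Taking unions over samples gives $\hatsC_p, \hatsC_g \subseteq \sC$, and the bound follows.

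For $p_s \leq p_p$ I would proceed sample-by-sample. Fix any drawn $\vw_i$; the pure-sampling candidate is exactly the $j=1$ iterate $[\vw_i \pm \lambda \vsigma]$ of PIE. If LBP accepts it, pure sampling stores this box while PIE stores a superset, because every subsequent iterate $[\vw_i \pm j \lambda \vsigma]$ with $j \geq 1$ contains the $j=1$ box. If LBP rejects it, both algorithms store nothing for $\vw_i$. Taking unions, $\hatsC_s \subseteq \hatsC_p$. The companion inequality $p_s \leq p_g$ proceeds identically once we note the componentwise inequality $\lambda \vsigma (\1 + \rho \ve^{\pm}) \geq \lambda \vsigma$, which is immediate from $\rho > 0$ and the fact that $\ve^{+}$, $\ve^{-}$ are sums of standard basis vectors with non-negative coefficients, so GIE's $j=1$ box contains the pure-sampling box.

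The main obstacle is a subtlety in this last step: LBP is sound but not complete, and widening the input box typically loosens the propagated output bound, so GIE's enlarged $j=1$ candidate could in principle fail LBP while pure sampling's smaller box passes, breaking the sample-wise inclusion. I would close the gap by exploiting the fact that the GIE expansion is chosen precisely in directions $\sE^+ \cup \sE^0$ along which $\nabla_{\rvw}(\va^{\tr} \cdot f_{\vw})(\vx) \cdot \ve_i \geq 0$, so locally (within a single ReLU activation region) widening in these directions does not decrease the propagated lower bound on $\va^{\tr} \cdot f_{\vw}(\vx)$ used by LBP. Alternatively, one may add a trivial fallback to Algorithm~\ref{alg: GIE} that defaults to $[\vw_i \pm \lambda \vsigma]$ whenever the enlarged $j=1$ box fails verification, making the sample-wise inclusion $\bar\sC_i^s \subseteq \bar\sC_i^g$ hold unconditionally. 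Either route closes the loop via the integration argument of the first paragraph.
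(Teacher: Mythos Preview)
Your approach coincides with the paper's: both reduce the inequalities to set-inclusions between the under-approximations and then integrate the density. For the upper bounds and for $p_s \leq p_p$, your argument is the paper's argument verbatim (the paper writes ``trivial'' for the former and gives the same sample-wise containment for the latter).

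Where you differ is in how much care you take with $p_s \leq p_g$. The paper dispatches it in one line, ``The proof for $p_s \leq p_g$ is similar'', implicitly treating the GIE $j=1$ box as playing the role of the pure-sampling box. You correctly observe that this is \emph{not} immediate: GIE's first candidate $[\vw_i - \vv^-, \vw_i + \vv^+]$ strictly contains $[\vw_i \pm \lambda\vsigma]$ whenever $\rho>0$, and LBP's monotone loosening means acceptance of the smaller box does not entail acceptance of the larger one. The paper itself notes, immediately after the proof, that $p_p \leq p_g$ can fail for large $\rho$; by the same mechanism $p_s \leq p_g$ can in principle fail too, so the subtlety you flag is genuine and the paper simply does not address it.

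That said, neither of your proposed fixes closes the gap as a proof of the proposition \emph{as stated}. Your first route (the gradient argument) is the heuristic that motivates GIE, but it controls the true output $\va^{\tr} f_{\vw}(\vx)$ only locally at a single point and within one activation pattern; it says nothing about the LBP relaxation over all of $\sX$, which is what the while-loop actually tests. Your second route (adding a fallback to $[\vw_i \pm \lambda\vsigma]$) would indeed make the claim true, but it proves a statement about a modified Algorithm~\ref{alg: GIE}, not the one written. In short: your proof matches the paper's, your extra scrutiny is warranted, and the residual gap is in the proposition itself rather than in your write-up.
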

\begin{proof}
  It is trivial that \(p_s, p_p, p_g \leq p\). We show that
  \(p_s \leq p_p\). Let \(\hatsC_s\) and \(\hatsC_v\) be the
  robustness certificates computed using \(\vw_1, \dots, \vw_s\) by
  the pure sampling approach \citep{WickerLPK20} and the PIE approach
  (Algorithm~\ref{alg: PIE}), respectively. For each \(\vw_i\), the
  pure sampling approach checks whether
  \(LBP([\vw_i \pm \lambda \vsigma], \sX) \subseteq \sS\). This is
  also checked in line \(8\) of Algorithm~\ref{alg: PIE}. Therefore,
  if \([\vw_i \pm \lambda \vsigma] \subseteq \hatsC_s\), then
  \([\vw_i \pm \lambda \vsigma] \subseteq \hatsC_v\). Thus,
  \(\hatsC_s \subseteq \hatsC_v\) and
  \(p_s = \int_{\hat{\sC_s}} f_{\vw}(\vx) d\vw \leq \int_{\hat{\sC_v}}
  f_{\vw}(\vx) d\vw = p_p\). The proof for $p_s \leq p_g$ is similar.
\end{proof}

Intuitively, one may expect that \(p_p \leq p_g\); however, this does
not necessarily hold. For instance, if the gradient-based scaling
factor \(\rho\) is tool large, then \(\vw + j \vv^+\) may well lie
within the set of unsafe weights.

As presented in Section~\ref{sec: Evaluation}, both PIE and GIE allow
us to compute bounds for the probabilistic robustness of BNNs that are
significantly tighter than the SoA. Before presenting their
performance, we would like to emphasise the theoretical significance
of the iterative expansion, used in both approaches in
Example~\ref{ex: Bounded Support}.
\begin{example}
  \label{ex: Bounded Support}
  Consider a BNN with \(k\) nodes and posterior
  \(\sN(\vmu, \vsigma^2)\), the best lower bound that can be computed
  using a single orthotope is when the orthotope is centred at
  \(\vmu\). For orthotopes used in the pure sampling approach, this is
  an orthotope of the form \(\sO = [\vmu \pm \lambda \vsigma]\)
  (cf. Algorithm~\ref{alg: PIE}).
  For a fixed \(\lambda > 0\), the tightest lower bound that can be
  obtained for probabilistic robustness is \(p_{\lambda}^k\), where
  \(p_{\lambda} = Prob_{x \sim \sN(0, 1)}(|x| \leq \lambda)\). This
  means that the probabilistic robustness approximations computed
  using only pure sampling vanishes exponentially as the number of
  parameters of the network increases. This cannot be addressed by
  increasing the number of sampled orthotopes, as exponentially many
  orthotopes would be needed.

  To better understand this effect, we observe that for a BNN with
  only \(10^3\) parameters and \(\lambda = 3\), this gives us a lower
  bound of \(p_3^{1,000} \approx 0.9974^{1,000} < 0.08\).
\end{example}
%


%
\section{Evaluation}
\label{sec: Evaluation}
We evaluate the performance of our approach against the previously
discussed SoA approach for computing lower bounds for probabilistic
robustness~\citep{WickerLPK20}. As in previous studies, the approach
is evaluated on the MNIST dataset~\citep{Lecun+MNIST} and additionally
the CIFAR10 dataset~\citep{Krizhevsky09}. All experiments were run on
an AMD Ryzen Threadripper 3970X 32-core with 256GB RAM.

The evaluation is divided into three parts. \textbf{(1)} Benchmark
against the SoA pure sampling approach in
\citep{WickerLPK20}. \textbf{(2)} Ablation study on the use of
gradient-based dynamic scaling factors. \textbf{(3)} ablation study on
the number of expansion iterations.

For the MNIST baselines, we use 6 fully-connected BNNs
from~\citep{WickerLPK20}. However, we note that these networks have
narrow distributions (s.d. of $\approx 10^{-5}$). Therefore, we also
use two networks trained for broader posterior distributions (s.d. of
$\approx 10^{-3}$) that achieve comparable clean accuracy and show
greater natural resilience to adversarial attack, a key strength of
BNNs~\citep{Bortolussi+22} -- we provide the train details in
Appendix~\ref{sec: Training Details}. MNIST networks are identified
using an $\ell\text{x}n$ nomenclature with $\ell$, the number of
hidden layers, and $n$, the number of nodes per hidden layer.

We also include a Convolutional Neural Network (CNN) trained on the
CIFAR10 dataset. The CNN consists of 5 deterministic convolutional
layers followed by 3 fully-connected Bayesian layers with 256, 64, and
10 nodes, respectively. As in~\citep{WickerLPK20}, all parameter
posteriors are Gaussian.

\begin{figure}[t]
\centering
  \raisebox{4.1mm}{
    \includegraphics[width=.35\linewidth]{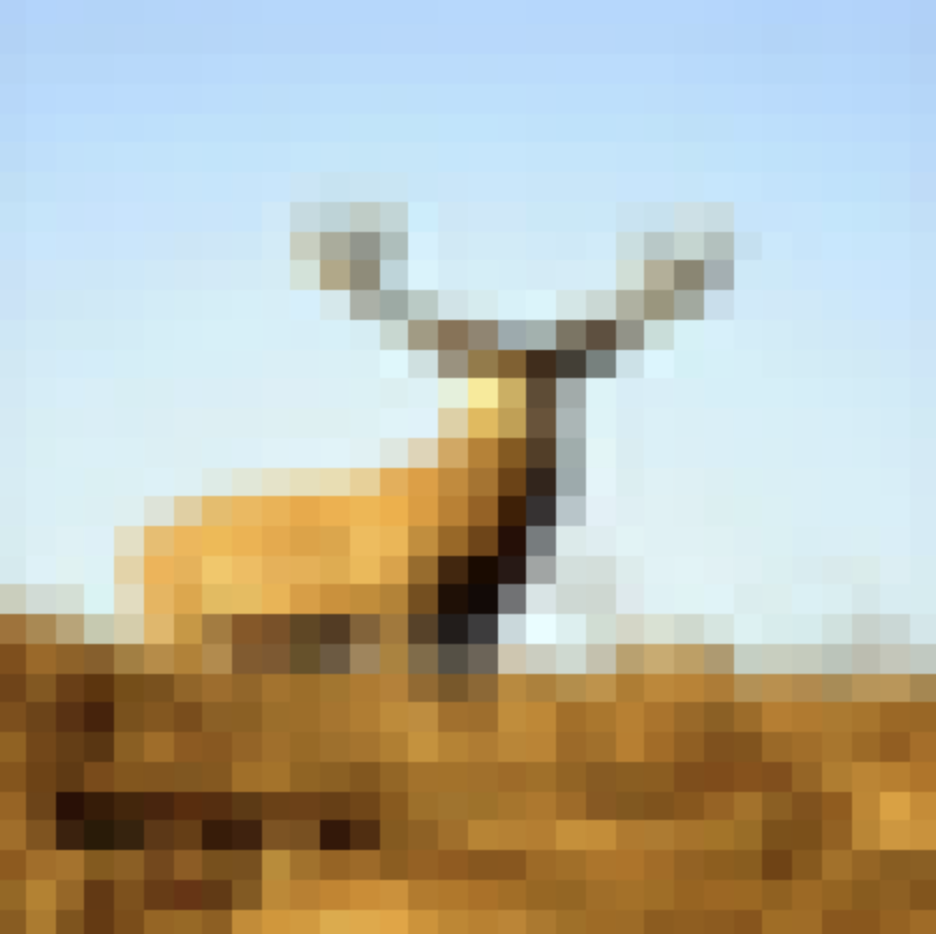}
  }
  \includegraphics[width=.609\linewidth]{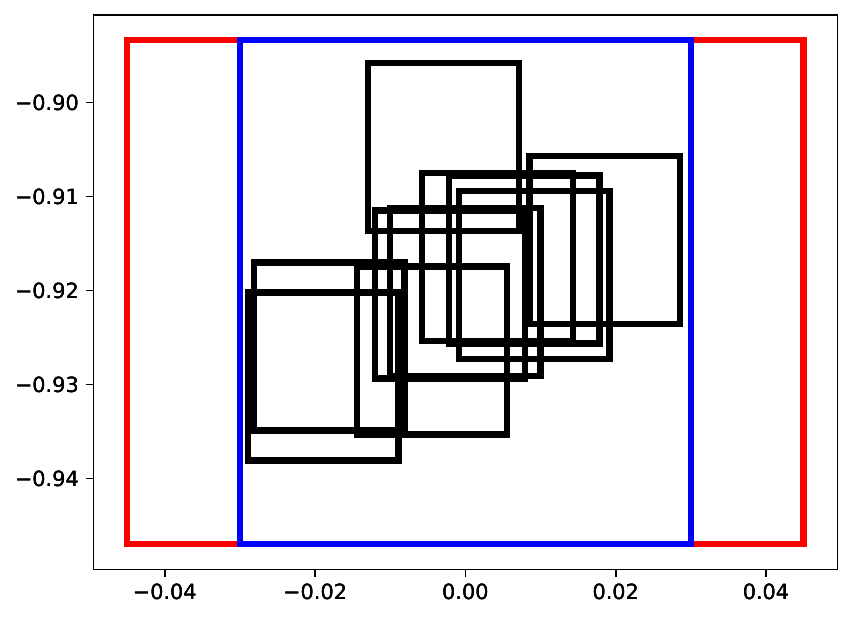}
  \caption{A 2-dimensional cross-section of the safe region \(\sC\)
    covered by pure sampling (black), PIE (blue), and GIE (Red) on the
    same image in CIFAR dataset.}
  \label{fig: CIFAR Covered Area}
\end{figure}
\subsection{Comparison of Pure Sampling, PIE, and GIE Approaches}
The performances of sampling-based methods are biased by the compute
available and the maximum number of iterations. To remove this bias,
we enforce an equal limit on the total number of LBP calls per
image. The results are presented in Table~\ref{tbl: Evaluation}. We
note that the GIE approach must also compute network gradients - a
step that neither PIE nor the pure sampling approach
require. Nevertheless, in practice, we find that the computation of
network gradients is much faster than the LBP used in all
approaches. We quantify and discuss the impact on runtime in
Appendix~\ref{sec: gie_timings}. Importantly, we noticed an error in
the official implementation in \citep{WickerLPK20} affecting the
computation of $p_{\text{safe}}$. We have corrected this error to
generate the results presented in Table~\ref{tbl: Evaluation} and
provide a detailed breakdown of the mistake in Appendix~\ref{sec:
  wicker_error}.

The results show a significant improvement over the pure sampling
method of \citep{WickerLPK20} in all cases with gradient-based scaling
providing the tightest guarantees. Moreover, we remark that in
obtaining these results the method from~\citep{WickerLPK20} depends on
careful selection of hyperparameters, namely the orthotope side
length, where the use of suboptimal values yields trivially small
guarantees -- for the results in Table~\ref{tbl: Evaluation} we used
grid-search to tune the hyperparameters in the pure sampling
approach. Differently, by using a small value for the initial
orthotope's side length, our approach is able to adapt to each image
and each network with little consideration for tuning hyperparameters.

In summary, the approach here presented considerably outperforms the
present SoA in estimating probabilistic robustness leading to bounds
that often more than double those in \citep{WickerLPK20}.

\begin{table*}[t]
  \begin{center}
    \resizebox{\textwidth}{!}{
    \begin{tabular}{ccccccccc}
\toprule
\multirow{2}{*}{Dataset} & \multirow{2}{*}{Network} & \multirow{2}{*}{\begin{tabular}[c]{@{}c@{}}Accuracy\\  (\%)\end{tabular}} & \multirow{2}{*}{\begin{tabular}[c]{@{}c@{}}Adversarial\\ Accuracy (\%)\end{tabular}} & \multirow{2}{*}{\begin{tabular}[c]{@{}c@{}}Input\\ Perturbation, \(\epsilon\)\end{tabular}} & \multicolumn{3}{c}{Probabilistic Certification (\%)}                                 \\ \cmidrule{6-8} 
                         &                          &                                                                           &                                                                                      &                                                                                             & \multicolumn{1}{c}{Sampling (\citeauthor{WickerLPK20})} & \multicolumn{1}{c}{Ours (PIE)} & Ours (GIE)    \\ \midrule
\multirow{8}{*}{MNIST}   & 1x128                    & 96                                                                        & 40                                                                                   & \multirow{3}{*}{0.025}                                                                      & \multicolumn{1}{c}{26.0}          & \multicolumn{1}{c}{56.9}       & \textbf{58.4} \\
                         & 1x256                    & 94                                                                        & 42                                                                                   &                                                                                             & \multicolumn{1}{c}{22.0}          & \multicolumn{1}{c}{50.0}       & \textbf{51.2} \\
                         & 1x512                    & 94                                                                        & 36                                                                                   &                                                                                             & \multicolumn{1}{c}{10.0}          & \multicolumn{1}{c}{35.0}       & \textbf{37.3} \\ \cmidrule{2-8} 
                         & 2x256                    & 92                                                                        & 16                                                                                   & \multirow{3}{*}{0.001}                                                                      & \multicolumn{1}{c}{72.0}          & \multicolumn{1}{c}{74.0}       & \textbf{74.0} \\
                         & 2x512                    & 82                                                                        & 2                                                                                    &                                                                                             & \multicolumn{1}{c}{22.0}          & \multicolumn{1}{c}{22.0}       & \textbf{26.1} \\
                         & 2x1024                   & 68                                                                        & 0                                                                                    &                                                                                             & \multicolumn{1}{c}{0.0}           & \multicolumn{1}{c}{0.0}        & 0.0           \\ \cmidrule{2-8} 
                         & 2x50                     & 96                                                                        & 70                                                                                   & 0.01                                                                                        & \multicolumn{1}{c}{58.6}          & \multicolumn{1}{c}{67.3}       & \textbf{71.8} \\ \cmidrule{2-8} 
                         & 2x150                    & 92                                                                        & 54                                                                                   & 0.001                                                                                       & \multicolumn{1}{c}{49.8}          & \multicolumn{1}{c}{54.9}       & \textbf{56.4} \\ \midrule
CIFAR10                  & CNN                      & 72                                                                        & 2                                                                                    & 0                                                                                           & \multicolumn{1}{c}{22.0}          & \multicolumn{1}{c}{27.1}       & \textbf{35.6} \\ \bottomrule
    \end{tabular}
  }
  \caption{Probabilistic robustness results for the pure sampling
    method from \citep{WickerLPK20} and ours. Adversarial accuracies
    were obtained using an \(\epsilon = 0.05\) bounded PGD attack for
    all networks. Probabilistic certification denotes the mean
    \(p_{\text{safe}}\) by each approach. Also, see the Appendix for a
    comparison of the probabilistic certification results obtained by
    each approach with the empirical probabilistic robustness values
    for each network.}
  \label{tbl: Evaluation}
  \end{center}
\end{table*}
\begin{figure}[t]
  \centering
  \includegraphics[width=.9\linewidth]{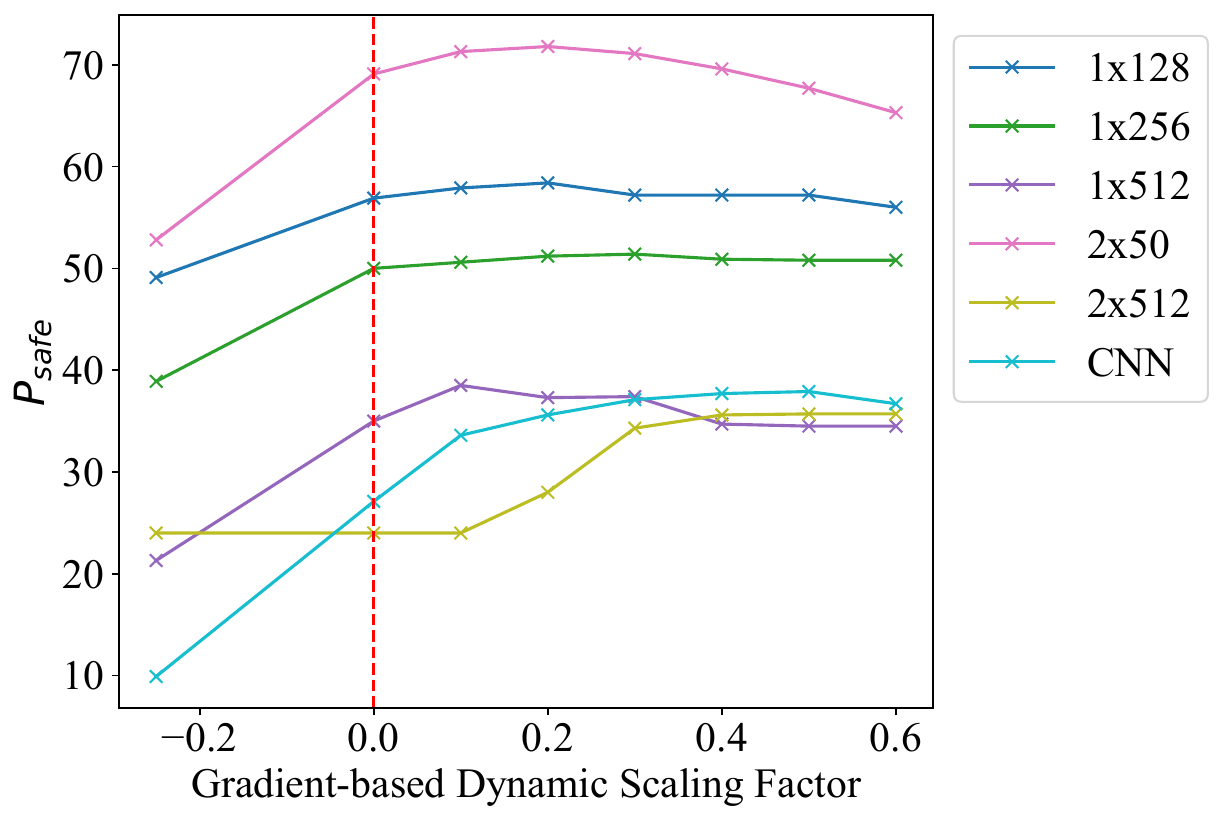}
  \caption{Probabilistic certification as a function of \(\rho\), the
    gradient-based scaling factor in Algorithm~\ref{alg: GIE}.}
  \label{fig: ablations}
\end{figure}
\subsection{ Ablation Study on the Use of Gradient-Based Dynamic
  Scaling Factors}
Here we explore the impact of using gradient-based dynamic scaling
factors.  In Figure~\ref{fig: ablations}, we plot the
\emph{probabilistic certification}, i.e., the mean
\(p_{\text{safe}}\), attained for a range of gradient-based scaling
factors and networks. Each point is averaged over 50 images from the
MNIST test set, same as \citep{WickerLPK20}.  In each case, we observe
that the optimal scaling factor is larger than 0 -- which represents
no gradient-guided scaling and is marked by the red dashed line. The
optimal gradient-based scaling factor varies between networks, as does
its effect on the overall $p_{\text{safe}}$. Examining the results for
individual images with a gradient-based scaling factor of 0 goes some
way to explaining this behaviour.

For the 1x256 network, we find that 89.1\% of the correctly classified
images have $p_{\text{safe}}$ values greater than 99.9\% or less than
0.1\%. On these networks, most of the images are either trivially
verifiable or prohibitively challenging, with only 10.9\% in
between. In such instances, the use of gradient-based scaling factors
has little effect, as $p_{\text{safe}}$ is either very close to 1, or
adversarial examples are found before $p_{\text{safe}}$ can reach
substantial values.

To further illustrate this point, we examined the CNN network, which
was most sensitive to the gradient-based scaling factor in the same
way. We found that only 68.6\% of the correctly classified images lay
outside the 0.1\% - 99.9\% boundary. To remove this dilution effect,
we examined how gradient-based dynamic scaling impacts a single image,
for which we use the same example given in Figure~\ref{fig: CIFAR
  Covered Area}. Using no gradient-based dynamic scaling on this
image, we achieve $p_{\text{safe}} = 36.5$ while using a
gradient-based scaling factor of 0.2 we reach
$p_{\text{safe}} = 97.8$, highlighting the importance of
gradient-based dynamic scaling on certain inputs.

\subsection{Ablation Study on the Number of Iterations}
\label{subsec: Iterations Ablation}
To better understand the inner workings of the iterative expansion
approaches presented in this paper, we calculated the number of
iterations it took Algorithm~\ref{alg: PIE} to reach the maximum
probabilistic certification presented in Table~\ref{tbl:
  Evaluation}. The results are available in Figure~\ref{fig:
  Iterations}. We observe that Algorithm~\ref{alg: PIE} takes 8
iterations to start computing a meaningful lower bound and at most 13
iterations to reach maximum probabilistic certification for the MNIST
networks.
The minimum number of iterations to obtain a meaningful lower bound
can be reduced, e.g., to a single iteration, by increasing the value
of \(\lambda\) in line 8 of Algorithm~\ref{alg: PIE}; however, this is
disadvantageous for two reasons: (1) one has to manually search for an
optimal value for \(\lambda\) similarly to the pure sampling approach,
(2) this can reduce the certified robustness obtained in later
iterations because the expansion rate of PIE increases with
\(\lambda\).
\begin{figure}[th]
  \centering
  \includegraphics[width=.778\linewidth]{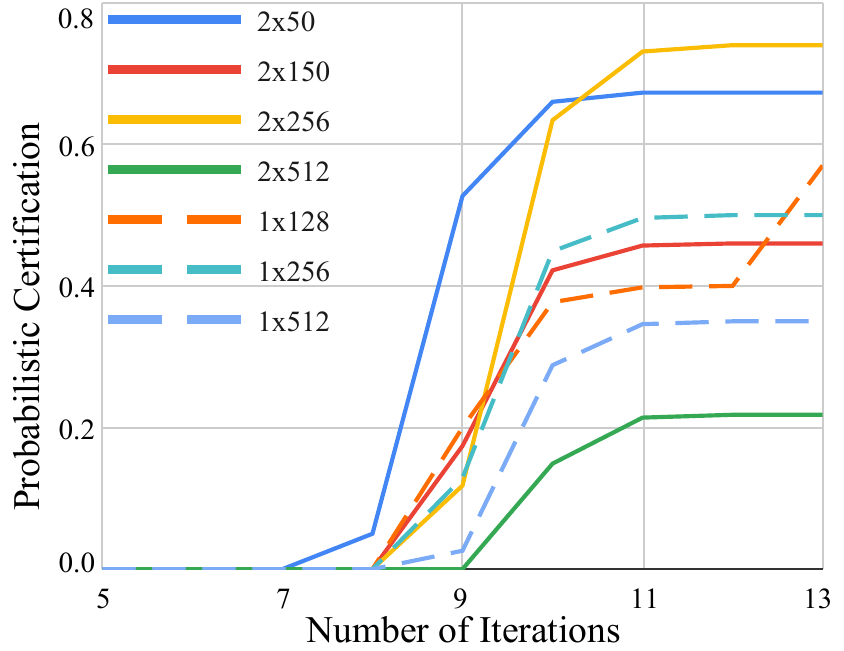}
  \vspace{-.3em}
  \caption{The number of expansion iterations before PIE reaches max
    probabilistic certifications in Table~\ref{tbl: Evaluation}.}
  \label{fig: Iterations}
\end{figure}
%


%
\section*{Conclusions}
As we discussed in the introduction, BNNs have been proposed as an
inherently robust architecture. This is particularly important in
safety-critical applications, where robust solutions are essential,
and models need to be shown to be highly robust before deployment.

Given this, it is essential to ascertain the robustness of BNNs in
production. Differently from plain neural networks, where robustness
verification can be recast into a deterministic optimisation problem,
establishing the robustness of BNNs is a probabilistic
problem. Finding an exact solution to this problem is infeasible even
for small models. As reported, proposals have been put forward to
approximate the probabilistic safety of BNNs by sampling regions in
the weight space; however, such approximations are loose and
underestimate the actual robustness of the model, thereby providing
relatively little information. This limits the possibility of using
BNNs in tasks where robustness needs to be verified before deployment.

We have put forward an alternative approach to search the weight
space, and therefore, approximate the lower bounds of probabilistic
safety in BNNs. The algorithms proposed provably produce tighter
bounds than the SoA sampling-based approaches. This conclusion was
validated in all experiments conducted. The approaches were evaluated
on all the previously proposed benchmarks, as well as new benchmarks
introduced in this paper, reporting a marked improvement on the bounds
of all benchmarks. In some cases, we obtain bounds that are 50\%
higher than the previously known bound of just over 20\%, opening the
way to better evaluate the robustness of BNNs before deployment.


%
\subsubsection*{Acknowledgements}
The authors are grateful to Peyman Hosseini for extensive comments on
previous versions of this paper and help with the experiments. This
work is supported by the UKRI Centre for Doctoral Training in Safe and
Trusted Artificial Intelligence, the “REXASI-PRO” H-EU project, call
HORIZON-CL4-2021-HUMAN-01-01, Grant agreement ID: 101070028, and the
UK Royal Academy of Engineering (CiET17/18-26).


%
\bibliography{bibliography}

\begin{thebibliography}{}

\bibitem[Abiodun et~al., 2018]{Abiodun+18}
Abiodun, O.~I., Jantan, A., Omolara, A.~E., Dada, K.~V., Mohamed, N.~A., and
  Arshad, H. (2018).
\newblock State-of-the-art in artificial neural network applications: A survey.
\newblock {\em Heliyon}, 4(11):e00938.

\bibitem[Adams et~al., 2023]{AdamsPLL23}
Adams, S., Patane, A., Lahijanian, M., and Laurenti, L. (2023).
\newblock {BNN-DP:} robustness certification of bayesian neural networks via
  dynamic programming.
\newblock In {\em International conference on machine learning, {ICML}}, pages
  1613--1622. PMLR.

\bibitem[Anwar et~al., 2018]{Anwar+18}
Anwar, S.~M., Majid, M., Qayyum, A., Awais, M., Alnowami, M., and Khan, M.~K.
  (2018).
\newblock Medical image analysis using convolutional neural networks: a review.
\newblock {\em Journal of medical systems}, 42(11):1--13.

\bibitem[Berrada et~al., 2021]{Berrada+21}
Berrada, L., Dathathri, S., Dvijotham, K., Stanforth, R., Bunel, R., Uesato,
  J., Gowal, S., and Kumar, M.~P. (2021).
\newblock Make sure you're unsure: {A} framework for verifying probabilistic
  specifications.
\newblock In {\em Advances in Neural Information Processing Systems,
  {NeurIPS}}, pages 11136--11147.

\bibitem[Blundell et~al., 2015]{Blundell+15}
Blundell, C., Cornebise, J., Kavukcuoglu, K., and Wierstra, D. (2015).
\newblock Weight uncertainty in neural network.
\newblock In {\em International conference on machine learning, {ICML}}, pages
  1613--1622. PMLR.

\bibitem[Bojarski et~al., 2016]{Bojarski+16}
Bojarski, M., Del~Testa, D., Dworakowski, D., Firner, B., Flepp, B., Goyal, P.,
  Jackel, L.~D., Monfort, M., Muller, U., Zhang, J., et~al. (2016).
\newblock End to end learning for self-driving cars.

\bibitem[Bortolussi et~al., 2022]{Bortolussi+22}
Bortolussi, L., Carbone, G., Laurenti, L., Patane, A., Sanguinetti, G., and
  Wicker, M. (2022).
\newblock On the robustness of bayesian neural networks to adversarial attacks.

\bibitem[Botoeva et~al., 2020]{BotoevaKKLM20}
Botoeva, E., Kouvaros, P., Kronqvist, J., Lomuscio, A., and Misener, R. (2020).
\newblock Efficient verification of relu-based neural networks via dependency
  analysis.
\newblock In {\em The Thirty-Fourth {AAAI} Conference on Artificial
  Intelligence, {AAAI}}.

\bibitem[Carbone et~al., 2020]{Carbone+20}
Carbone, G., Wicker, M., Laurenti, L., Patan{\'{e}}, A., Bortolussi, L., and
  Sanguinetti, G. (2020).
\newblock Robustness of bayesian neural networks to gradient-based attacks.
\newblock In {\em Advances in Neural Information Processing Systems,
  {NeurIPS}}.

\bibitem[Cardelli et~al., 2019a]{Cardelli+19}
Cardelli, L., Kwiatkowska, M., Laurenti, L., Paoletti, N., Patane, A., and
  Wicker, M. (2019a).
\newblock Statistical guarantees for the robustness of bayesian neural
  networks.
\newblock In {\em Proceedings of the Twenty-Eighth International Joint
  Conference on Artificial Intelligence, {IJCAI}}.

\bibitem[Cardelli et~al., 2019b]{CardelliKLP19}
Cardelli, L., Kwiatkowska, M., Laurenti, L., and Patane, A. (2019b).
\newblock Robustness guarantees for bayesian inference with gaussian processes.
\newblock In {\em The Thirty-Third Conference on Artificial Intelligence,
  {AAAI}}.

\bibitem[Dalvi et~al., 2004]{Dalvi+04}
Dalvi, N.~N., Domingos, P.~M., Mausam, Sanghai, S.~K., and Verma, D. (2004).
\newblock Adversarial classification.
\newblock In {\em Proceedings of the Tenth {ACM} {SIGKDD} International
  Conference on Knowledge Discovery and Data Mining {KDD}}, pages 99--108.
  {ACM}.

\bibitem[Dreossi et~al., 2019]{Dreossi+19}
Dreossi, T., Fremont, D.~J., Ghosh, S., Kim, E., Ravanbakhsh, H.,
  Vazquez{-}Chanlatte, M., and Seshia, S.~A. (2019).
\newblock {VerifAI: A Toolkit for the Formal Design and Analysis of Artificial
  Intelligence-Based Systems}.
\newblock In {\em Computer Aided Verification {CAV}}, volume 11561 of {\em
  Lecture Notes in Computer Science}, pages 432--442. Springer.

\bibitem[Ganin et~al., 2017]{GaninUAGLLML17}
Ganin, Y., Ustinova, E., Ajakan, H., Germain, P., Larochelle, H., Laviolette,
  F., Marchand, M., and Lempitsky, V.~S. (2017).
\newblock Domain-adversarial training of neural networks.
\newblock In {\em Domain Adaptation in Computer Vision Applications}, Advances
  in Computer Vision and Pattern Recognition. JMLR. org.

\bibitem[Goodfellow et~al., 2015]{GoodfellowSS14}
Goodfellow, I.~J., Shlens, J., and Szegedy, C. (2015).
\newblock Explaining and harnessing adversarial examples.
\newblock In {\em International Conference on Learning Representations,
  {ICLR}}. OpenReview.net.

\bibitem[Gowal et~al., 2018]{Gowal+18}
Gowal, S., Dvijotham, K., Stanforth, R., Bunel, R., Qin, C., Uesato, J.,
  Arandjelovic, R., Mann, T.~A., and Kohli, P. (2018).
\newblock On the effectiveness of interval bound propagation for training
  verifiably robust models.

\bibitem[Hastings, 1970]{Hastings70}
Hastings, W.~K. (1970).
\newblock Monte carlo sampling methods using markov chains and their
  applications.
\newblock {\em Biometrika}, 57:97--109.

\bibitem[Henriksen and Lomuscio, 2020]{HenriksenL20}
Henriksen, P. and Lomuscio, A.~R. (2020).
\newblock Efficient neural network verification via adaptive refinement and
  adversarial search.
\newblock In {\em 24th European Conference on Artificial Intelligence, {ECAI}}.

\bibitem[Hosseini and Lomuscio, 2023]{HosseiniL23}
Hosseini, M. and Lomuscio, A. (2023).
\newblock Bounded and unbounded verification of rnn-based agents in
  non-deterministic environments.
\newblock In {\em Proceedings of the 22nd International Conference on
  Autonomous Agents and Multiagent Systems, {AAMAS}}, pages 2382--2384.
  IFAAMAS.

\bibitem[Ilyas et~al., 2019]{Ilyas+19}
Ilyas, A., Santurkar, S., Tsipras, D., logan Engstrom, Tran, B., and Madry, A.
  (2019).
\newblock Adversarial examples are not bugs, they are features.
\newblock In {\em Advances in Neural Information Processing Systems,
  {NeurIPS}}.

\bibitem[Jospin et~al., 2022]{Jospin+22}
Jospin, L.~V., Laga, H., Boussaid, F., Buntine, W., and Bennamoun, M. (2022).
\newblock Hands-on bayesian neural networks—a tutorial for deep learning
  users.
\newblock {\em IEEE Computational Intelligence Magazine}, 17(2):29--48.

\bibitem[Katz et~al., 2017]{Katz+17}
Katz, G., Barrett, C.~W., Dill, D.~L., Julian, K., and Kochenderfer, M.~J.
  (2017).
\newblock Reluplex: An efficient {SMT} solver for verifying deep neural
  networks.
\newblock In {\em Computer Aided Verification - 29th International Conference,
  {CAV}}, volume 10426 of {\em Lecture Notes in Computer Science}, pages
  97--117. Springer.

\bibitem[Kingma and Ba, 2015]{KingmaB15}
Kingma, D.~P. and Ba, J. (2015).
\newblock Adam: {A} method for stochastic optimization.
\newblock In {\em 3rd International Conference on Learning Representations,
  {ICLR}}.

\bibitem[Kouvaros and Lomuscio, 2021]{KouvarosL21}
Kouvaros, P. and Lomuscio, A. (2021).
\newblock Towards scalable complete verification of relu neural networks via
  dependency-based branching.
\newblock In {\em Proceedings of the Thirtieth International Joint Conference
  on Artificial Intelligence, {IJCAI}}.

\bibitem[Krizhevsky, 2009]{Krizhevsky09}
Krizhevsky, A. (2009).
\newblock Learning multiple layers of features from tiny images.
\newblock Technical report, University of Toronto.

\bibitem[Lechner et~al., 2021]{Lechner+21}
Lechner, M., \v{Z}ikeli\'{c}, D. o.~e., Chatterjee, K., and Henzinger, T.
  (2021).
\newblock Infinite time horizon safety of bayesian neural networks.
\newblock In {\em Advances in Neural Information Processing Systems},
  volume~34, pages 10171--10185. Curran Associates, Inc.

\bibitem[LeCun et~al., 2010]{Lecun+MNIST}
LeCun, Y., Cortes, C., and Burges, C.~J. (2010).
\newblock {MNIST} handwritten digit database.

\bibitem[Liu et~al., 2021a]{Liu+21}
Liu, C., Arnon, T., Lazarus, C., Strong, C., Barrett, C., Kochenderfer, M.~J.,
  et~al. (2021a).
\newblock Algorithms for verifying deep neural networks.
\newblock {\em Foundations and Trends{\textregistered} in Optimization},
  4(3-4):244--404.

\bibitem[Liu et~al., 2021b]{Liu21}
Liu, Z., Cai, Y., Wang, H., Chen, L., Gao, H., Jia, Y., and Li, Y. (2021b).
\newblock Robust target recognition and tracking of self-driving cars with
  radar and camera information fusion under severe weather conditions.
\newblock {\em IEEE Transactions on Intelligent Transportation Systems}.

\bibitem[Lomuscio and Maganti, 2017]{LomuscioM17}
Lomuscio, A. and Maganti, L. (2017).
\newblock An approach to reachability analysis for feed-forward relu neural
  networks.

\bibitem[MacKay, 1992]{Mackay92}
MacKay, D.~J. (1992).
\newblock A practical bayesian framework for backpropagation networks.
\newblock {\em Neural computation}, 4(3):448--472.

\bibitem[Madry et~al., 2018]{MadryMSTV18}
Madry, A., Makelov, A., Schmidt, L., Tsipras, D., and Vladu, A. (2018).
\newblock Towards deep learning models resistant to adversarial attacks.
\newblock In {\em International Conference on Learning Representations,
  {ICLR}}. OpenReview.net.

\bibitem[Pang et~al., 2021]{Pang+21}
Pang, Y., Cheng, S., Hu, J., and Liu, Y. (2021).
\newblock Evaluating the robustness of bayesian neural networks against
  different types of attacks.

\bibitem[Pulina and Tacchella, 2010]{PulinaT10}
Pulina, L. and Tacchella, A. (2010).
\newblock An abstraction-refinement approach to verification of artificial
  neural networks.
\newblock In {\em Computer Aided Verification, 22nd International Conference,
  {CAV}}, volume 6174 of {\em Lecture Notes in Computer Science}, pages
  243--257. Springer.

\bibitem[Salimans et~al., 2015]{Salimans+15}
Salimans, T., Kingma, D., and Welling, M. (2015).
\newblock Markov chain monte carlo and variational inference: Bridging the gap.
\newblock In {\em International conference on machine learning, {ICML}}, pages
  1218--1226. PMLR.

\bibitem[Shafahi et~al., 2019]{Shafahi+19}
Shafahi, A., Najibi, M., Ghiasi, A., Xu, Z., Dickerson, J.~P., Studer, C.,
  Davis, L.~S., Taylor, G., and Goldstein, T. (2019).
\newblock Adversarial training for free!
\newblock In {\em Advances in Neural Information Processing Systems,
  {NeurIPS}}.

\bibitem[Sharif et~al., 2016]{Sharif+16}
Sharif, M., Bhagavatula, S., Bauer, L., and Reiter, M.~K. (2016).
\newblock Accessorize to a crime: Real and stealthy attacks on state-of-the-art
  face recognition.
\newblock In {\em Proceedings of the 2016 {ACM} {SIGSAC} Conference on Computer
  and Communications Security}, pages 1528--1540. {ACM}.

\bibitem[Singh et~al., 2018]{SinghGMPV18}
Singh, G., Gehr, T., Mirman, M., P{\"{u}}schel, M., and Vechev, M.~T. (2018).
\newblock Fast and effective robustness certification.
\newblock In {\em Advances in Neural Information Processing Systems,
  {NeurIPS}}.

\bibitem[Singh et~al., 2019]{SinghGPV19}
Singh, G., Gehr, T., P{\"{u}}schel, M., and Vechev, M.~T. (2019).
\newblock An abstract domain for certifying neural networks.
\newblock {\em Proc. {ACM} Program. Lang.}, 3({POPL}):41:1--41:30.

\bibitem[Sinha et~al., 2017]{Sinha+18}
Sinha, A., Namkoong, H., and Duchi, J.~C. (2017).
\newblock Certifiable distributional robustness with principled adversarial
  training.
\newblock In {\em International Conference on Learning Representations,
  {ICLR}}.

\bibitem[Szegedy et~al., 2014]{Szegedy+13}
Szegedy, C., Zaremba, W., Sutskever, I., Bruna, J., Erhan, D., Goodfellow,
  I.~J., and Fergus, R. (2014).
\newblock Intriguing properties of neural networks.
\newblock In {\em 2nd International Conference on Learning Representations,
  {ICLR}}.

\bibitem[Tjeng et~al., 2019]{TjengXT19}
Tjeng, V., Xiao, K.~Y., and Tedrake, R. (2019).
\newblock Evaluating robustness of neural networks with mixed integer
  programming.
\newblock In {\em International Conference on Learning Representations,
  {ICLR}}. OpenReview.net.

\bibitem[Tram{\`{e}}r et~al., 2018]{Tramer+18}
Tram{\`{e}}r, F., Kurakin, A., Papernot, N., Goodfellow, I.~J., Boneh, D., and
  McDaniel, P.~D. (2018).
\newblock Ensemble adversarial training: Attacks and defenses.
\newblock In {\em 6th International Conference on Learning Representations,
  {ICLR}}. OpenReview.net.

\bibitem[Uchendu et~al., 2021]{UchenduCMH21}
Uchendu, A., Campoy, D., Menart, C., and Hildenbrandt, A. (2021).
\newblock Robustness of bayesian neural networks to white-box adversarial
  attacks.
\newblock In {\em Fourth {IEEE} International Conference on Artificial
  Intelligence and Knowledge Engineering, {AIKE}}.

\bibitem[Wang et~al., 2018]{Wang+19}
Wang, S., Pei, K., Whitehouse, J., Yang, J., and Jana, S. (2018).
\newblock Formal security analysis of neural networks using symbolic intervals.
\newblock In {\em 27th {USENIX} Security Symposium, {USENIX}}.

\bibitem[Wicker et~al., 2020]{WickerLPK20}
Wicker, M., Laurenti, L., Patane, A., and Kwiatkowska, M. (2020).
\newblock Probabilistic safety for bayesian neural networks.
\newblock In {\em Proceedings of the Thirty-Sixth Conference on Uncertainty in
  Artificial Intelligence, {UAI}}.

\bibitem[Wicker et~al., 2023]{WickerPLW23}
Wicker, M., Patane, A., Laurenti, L., and Kwiatkowska, M. (2023).
\newblock Adversarial robustness certification for bayesian neural networks.

\bibitem[Wijk et~al., 2022]{WijkWK22}
Wijk, H., Wang, B., and Kwiatkowska, M. (2022).
\newblock Robustness guarantees for credal bayesian networks via constraint
  relaxation over probabilistic circuits.
\newblock In {\em International Joint Conference on Artificial Intelligence,
  {IJCAI}}, pages 4885--4892.

\bibitem[Zhang et~al., 2023]{Zhang+23}
Zhang, Y., Wei, Z., Zhang, X., and Sun, M. (2023).
\newblock Using {Z3} for formal modeling and verification of {FNN} global
  robustness.
\newblock In {\em The 35th International Conference on Software Engineering and
  Knowledge Engineering, {SEKE}}, pages 110--113. {KSI} Research Inc.

\end{thebibliography}

\onecolumn
\appendix

\section{Limitations}
\label{sec: limitations}
Methods for verifying the probabilistic robustness of BNNs are
extremely sensitive to parameter count - this remains true for both
algorithms here proposed. Therefore scaling to larger networks is a
challenge due to two primary effects: Firstly, as network size
increases so does the parameter count and respective dimensionality of
the probability space. The higher this dimensionality, the more
important sampling \textit{large} orthotopes becomes, and the less
effective sampling \textit{multiple} orthotopes becomes. This reliance
on being able to verify single large orthotopes makes these algorithms
vulnerable to loose verification techniques. Secondly, as network size
increases it is well known that the tightness of bound propagation
techniques suffers, thus impacting the ability of our algorithms to
verify any orthotopes. Both of these limitations can be alleviated by
using tighter verification procedures in place of bound propagation.

%
\section{Correction to~\citep{WickerLPK20} implementation}
\label{sec: wicker_error}
We remarked in Section~\ref{sec: Evaluation} that while generating the
experimental results we identified an error in the implementation
in~\citep{WickerLPK20}. In the following, we present more details of
the issue identified. The results presented in the paper refer to the
corrected version of their implementation.

Given a series of $s$ potentially-overlapping orthotopes,
$\hatsC_1, \dots, \hatsC_s$, we can calculate the probability captured
in the series by computing the cumulative probability for each
orthotope $\hatsC_i$ and summing over the series, while taking into
account that we need to subtract the cumulative probabilities of the
intersections and so on according to the inclusion-exclusion
principle. Computing the cumulative probability of a single orthotope
is trivial as the distributions in each dimension are independent;
accordingly, we have
\begin{equation}
  \label{eq:correct_1}
  P(\hatsC_i) = \prod_{d=1}^{n_{\vw}}\frac{1}{2}\left(
    \erf(\frac{\vmu_d - \vell^i_d}{\vsigma_d \sqrt{2}}) -
    \erf(\frac{\vmu_d - \vu^i_d}{\vsigma_d \sqrt{2}})\right),
\end{equation}
where $\vmu_d$ and $\vsigma_d$ are the mean and standard deviation of
the $d$-th parameter and $\vell^i_d$ and $\vu^i_d$ are the lower and
upper bounds in the $d$-th dimension for the $i$-th orthotope. We can
then obtain the probability of the union of all $s$ orthotopes
according to the inclusion-exclusion principle. For example, if there
are no overlaps, we can simply sum $P(\hatsC_i)$'s for all $\hatsC_i$
and obtain
\begin{equation}
  \label{eq:correct_2}
  \hat{p}_{\safe} = \sum_{i=1}^s P(\hatsC_i).
\end{equation}
This is correctly presented in Algorithm~1 from~\citep{WickerLPK20}.

However, the code\footnote{At the time of writing, this is available
  at \texttt{https://github.com/matthewwicker/ProbabilisticSafetyforBNNs}.}
accompanying the paper implements this differently leading to large
overestimates (i.e., incorrect values) of the $p_{\safe}$ values as
presented in~\citep{WickerLPK20}.

The implementation first computes the dimension-wise probability,
\begin{equation}
  \label{eq:incorrect_1}
  P(\vw_d) = \sum_{i=1}^s \frac{1}{2}\left(
    \erf(\frac{\vmu_d - \vell^i_d}{\vsigma_d \sqrt{2}}) -
    \erf(\frac{\vmu_d - \vu^i_d}{\vsigma_d \sqrt{2}})\right),
\end{equation}
where $\vw_d$ is the $d$-th Bayesian parameter. The implementation
then computes the product of $P(\vw_d)$ across the $n_{\vw}$
dimensions,
\begin{equation}
  \label{eq:incorrect_2}
  \hat{p}_{\safe} = \prod_{d=1}^{n_{\vw}} P(\vw_d),
\end{equation}
where $\hat{p}_{\safe}$ is marked with a hat to represent an invalid
$p_{\safe}$ value. The error comes from the fact that the operations
in Equations~\eqref{eq:correct_1} and \eqref{eq:correct_2} are
noncommutative.

Below we show two code snippets taken from the~\citep{WickerLPK20}
implementation. The first shows the orthotopes being passed to the
`\texttt{compute\_interval\_prob\_weight}' function (lines 1-4). Note
that the parameters are separated into weights and biases and by layer
at this point. The second snippet shows the definition of the
`\texttt{compute\_interval\_prob\_weight}' function, where the
probability is calculated for each dimension over all orthotopes, as
in Equation~\eqref{eq:incorrect_1}. The returned parameter-wise
probabilities are then combined by multiplication as in
Equation~\eqref{eq:incorrect_2} producing $\hat{p}_{\safe}$, aliased
`\texttt{p}' in the code (lines 9-19).

We are confident this is an error in the implementation. The method
itself is correct, but the results produced by the implementation are
incorrect (overestimates). To validate this is indeed an error in the
code, we asked two expert ML authors to check this portion of the code
by making no suggestion it may have an error and both independently
identified the same error we report above.

\begin{listing*}[tb]
\begin{lstlisting}[language=Python]
pW_0 = compute_interval_probs_weight(np.asarray(vW_0), marg=w_margin, mean=mW_0, std=dW_0)
pb_0 = compute_interval_probs_bias(np.asarray(vb_0), marg=w_margin, mean=mb_0, std=db_0)
pW_1 = compute_interval_probs_weight(np.asarray(vW_1), marg=w_margin, mean=mW_1, std=dW_1)
pb_1 = compute_interval_probs_bias(np.asarray(vb_1), marg=w_margin, mean=mb_1, std=db_1)

# Now that we have all of the probabilities we just need to multiply them out to get
# the final lower bound on the probability of the condition holding.
# Work with these probabilities in log space
p = 0.0
for i in pW_0.flatten():
    p+=math.log(i)
for i in pb_0.flatten():
    p+=math.log(i)
for i in pW_1.flatten():
    p+=math.log(i)
for i in pb_1.flatten():
    p+=math.log(i)
#print math.exp(p)
return math.exp(p)
\end{lstlisting}
\end{listing*}
\begin{listing*}[tb]
\begin{lstlisting}[language=Python]
def compute_interval_probs_weight(vector_intervals, marg, mean, std):
    means = mean; stds = std
    prob_vec = np.zeros(vector_intervals[0].shape)
    for i in range(len(vector_intervals[0])):
        for j in range(len(vector_intervals[0][0])):
            intervals = []
            for num_found in range(len(vector_intervals)):
                interval = [vector_intervals[num_found][i][j]-(stds[i][j]*marg), vector_intervals[num_found][i][j]+(stds[i][j]*marg)]
                intervals.append(interval)
            p = compute_erf_prob(merge_intervals(intervals), means[i][j], stds[i][j])
            prob_vec[i][j] = p
    return np.asarray(prob_vec)
\end{lstlisting}
\end{listing*}
\begin{table}[t]
  \begin{center}
    \begin{tabular}{rc}
      \toprule
      Network & Relative Runtime ($\pm \%$) \\ \midrule
      1x128   & +3.3                        \\
      1x256   & +2.5                        \\
      1x512   & +1.4                        \\ \midrule
      2x256   & +0.8                        \\
      2x512   & +0.6                        \\
      2x1024  & +0.2                        \\ \midrule
      2x50    & +2.3                        \\
      2x150   & +0.4                        \\ \midrule
      CNN     & +0.5                        \\ \bottomrule    
    \end{tabular}%
    \caption{Impact of computing gradients for PIE algorithm on
      runtime. The data are presented as a percentage increase or
      decrease in runtime as compared to the \citep{WickerLPK20}
      approach on the same networks.}
    \label{tbl: runtime_results}
  \end{center}
\end{table}
\section{Evaluation of timing for PIE algorithm}
\label{sec: gie_timings}
In Table~\ref{tbl: runtime_results} we examine the impact on runtime
of generating the gradients required for the PIE algorithm. We observe
that in each case there is an increase in runtime, peaking at $3.3\%$
for the 1x128 network. The relative cost decreases as network size
increases as a result of the gradient computation scaling better than
the LBP computation with network size.

\section{Training Details}
\label{sec: Training Details}
In addition to evaluating our approaches on the SoA benchmarks, we
evaluated our approach on two fully connected BNNs and a convolutional
BNN (only the fully connected layers are Bayesian). Similarly to the
SoA, we used normal distributions for the prior (with mean 0 and
standard deviation 1) and the posterior of the Bayesian layers. Using
a wider standard deviation than the SoA, as expected, has resulted in
networks that are more adversarially robust (cf. Table~\ref{tbl:
  Evaluation}). We trained the networks using variation inference
\citep{Blundell+15} using a learning rate of 0.001 with Adam optimiser
\citep{KingmaB15}. All the hidden layers in the networks use ReLU
activations.

The fully connected networks are trained on the MNIST dataset and each
have 2 hidden layers with 50 and 150 nodes, respectively. The
convolutional network is trained on the CIFAR10 dataset and has 5
convolutional layers with filter size 3 and stride 2, followed by 3
fully connected Bayesian hidden layers of sizes x and y.

\section{Comparison against Empirical Probabilistic Robustness}
In Figure~\ref{fig: Comparison against Empirical}, we compare the pure
sampling approach of \citep{WickerLPK20}, as well as the PIE and GIE
approaches, presented in this paper, against the empirical values we
obtained for the probabilistic robustness of the networks in
Table~\ref{tbl: Evaluation}. We calculated the values for the
empirical probabilistic robustness by sampling from the BNN posterior
and using IBP to check whether classified correctly. Each value for
the empirical accuracy shows the percentage of samples that resulted
in correct classification.

We observe that PIE and GIE always perform favourably compared to pure
sampling, and in some instances, such as for 1-layer networks, PIE and
GIE significantly outperform pure sampling. Nevertheless, for most
networks, there is still a gap between the certified robustness
obtained by each of the approaches and the empirical probabilistic
robustness.
\begin{figure}[th]
  \centering
  \includegraphics[width=.85\linewidth]{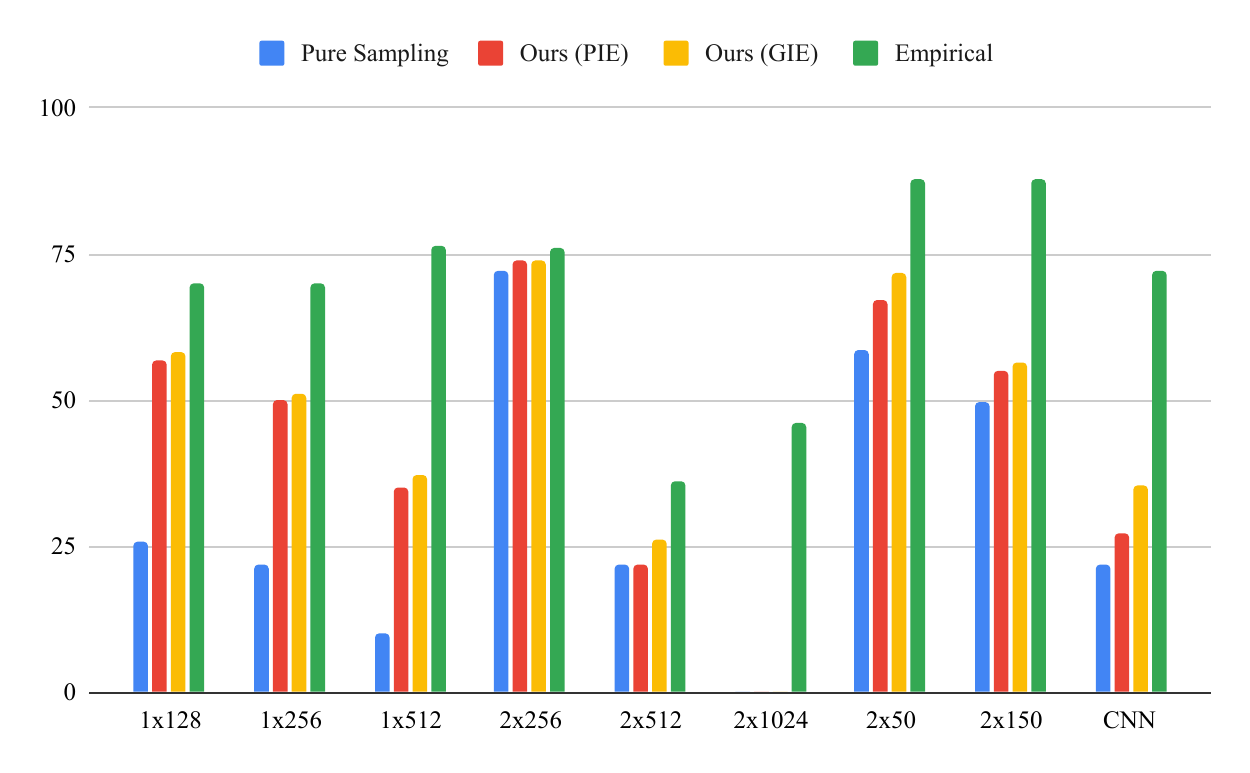}
  \caption{Comparison of the pure sampling, PIE, and GIE approaches
    against the empirical probabilistic robustness, obtained by
    sampling parameters from the BNN posterior and using IBP to verify
    whether they result in correct classification. 50 samples were
    used for calculating each empirical accuracy.}
  \label{fig: Comparison against Empirical}
\end{figure}
%


\end{document}